\documentclass[twoside,11pt]{article}
\usepackage[abbrvbib, preprint]{jmlr2e}

\usepackage[utf8]{inputenc} %
\usepackage[T1]{fontenc}    %
\usepackage{hyperref}       %
\usepackage{url}            %
\usepackage{booktabs}       %
\usepackage{amsfonts}       %
\usepackage{amsmath}        %
\usepackage{amsthm}         %
\usepackage{amssymb}        %
\usepackage{enumitem}       %
\usepackage{nicefrac}       %
\usepackage{microtype}      %
\usepackage{xcolor}         %
\usepackage{graphicx}       %
\usepackage{subcaption}     %
\usepackage[most]{tcolorbox} %
\usepackage{wrapfig}
\usepackage{multirow}
\usepackage{caption}
\graphicspath{{figures/}}

\newtcolorbox{contribbox}[1][]{%
  enhanced,
  colback=blue!3,
  colframe=blue!55!black,
  boxrule=0.8pt,
  arc=2pt,
  left=6pt, right=6pt, top=4pt, bottom=4pt,
  fonttitle=\bfseries,
  coltitle=blue!55!black,
  colbacktitle=white,
  attach boxed title to top left={yshift=-2mm, xshift=4mm},
  boxed title style={sharp corners, boxrule=0.8pt, colframe=blue!55!black},
  title={Highlight},
  #1
}

\usepackage{thmtools}
\newtheorem{theorem}{Theorem}[section]
\newtheorem{proposition}[theorem]{Proposition}

\newtheorem{remark}[theorem]{Remark}

\title{Controlling Transient Amplification Improves\\ Long-horizon Rollouts}

\ShortHeadings{}{Controlling Transient Amplification Improves Long-horizon Rollouts}
\firstpageno{1}

\author{\name Adeel Pervez \email Adeel.Pervez@ist.ac.at \\
       \addr Institute of Science and Technology Austria\\
       Klosterneuburg, Austria 
       \AND
       \name Francesco Locatello \email Francesco.Locatello@ist.ac.at\\
       \addr Institute of Science and Technology Austria\\
       Klosterneuburg, Austria 
       }

\begin{document}
\maketitle
\begin{abstract}
  Autoregressive neural simulators now match classical solvers on short-horizon prediction of physical systems, yet their accuracy degrades rapidly when rolled out over long horizons. 
In this work, we identify \emph{transient amplification} of perturbations around rollout trajectories as a structural mechanism driving rollout error.
Using a linearization analysis we show that when the Jacobians along an autoregressive trajectory are non-normal and non-commuting, the model amplifies errors transiently, resulting in model rollout drift even when the overall system is asymptotically stable. 
Building on the analysis, we propose \emph{commutativity regularization}: a combination of two penalties designed to reduce the normality defect of individual Jacobians and the commutator norm of Jacobians across steps.
The penalties are estimated with Jacobian-vector products and have no inference-time cost.
We show a propagator bound that quantifies rollout error under approximate commutativity and normality.
We evaluate UNet and FNO variants with commutativity regularization on 1D and 2D spatio-temporal data in synthetic and real settings, showing successful long-horizon rollouts over thousands of steps. 
Further, we show that the method improves FourCastNet climate forecasts on ERA5 without using any new data.
The gain is most pronounced out-of-distribution: trained on trajectories of a few hundred steps, regularized models remain in-distribution for thousands of rollout steps on initial conditions where baselines diverge.
  
\end{abstract}

\section{Introduction}
\label{sec:intro}
Neural networks that simulate physical systems one step at a time have become remarkably
capable.
Models such as FourCastNet~\citep{pathak2022fourcastnet}, GraphCast~\citep{lam2023graphcast}, Pangu-Weather~\citep{bi2023pangu},     Aurora~\citep{bodnar2024aurora}, NeuralGCM~\citep{kochkov2024neuralgcm}  and the diffusion-based GenCast~\citep{price2025gencast} now match or        surpass operational numerical weather prediction at
lead times of five to seven days. 
Similar progress has been made across fluid dynamics~\citep{sanchezgonzalez2020gns},
climate emulation~\citep{kochkov2024neuralgcm}, and molecular simulation~\citep{batzner2022nequip} using autoregressive architectures such as Fourier Neural
Operators, U-Nets and graph neural networks~\citep{li2021fno,ronneberger2015unet}.
On the other hand, the same models, when run iteratively for many steps, degrade far faster than their
single-step accuracy would suggest: predictions that closely match the ground truth at one
step may diverge or drift within tens of steps on complex systems~\citep{lippe2023pderefiner}.
Bridging this gap between single-step and long-rollout performance is the central open
problem for neural simulators of physical dynamics.

The standard explanation of this phenomenon is \emph{distributional shift}.
During training the model sees clean ground-truth states as inputs; at rollout time it
receives its own imperfect predictions, which progressively drift away from the training
distribution.
Recent methods address this by exposing the model to its own errors during
training~\citep{lippe2023pderefiner,cao2025spectralrefiner} or by applying iterative
corrections at inference time~\citep{kohl2024diffusion}, requiring multi-step training
or expensive inference.
These approaches are often effective, but they treat rollout distribution shift as a data or
inference problem and do not identify the structural mechanisms underlying error growth.

\looseness=-1In this paper, we identify one such mechanism in \emph{transient amplification} of rollout error. 
For a model linearized around its predicted trajectory, the prediction error at each step evolves under a sequence of local linear Jacobian maps.
When such a map is \emph{normal} \footnote{A linear operator or matrix $A$ is \emph{normal} if $AA^\top = A^\top A$. Normal matrices have a full set of \emph{orthogonal} eigenvectors.} it acts cleanly on errors, stretching them along
orthogonal directions so that each direction behaves in isolation.
\emph{Non-normal} maps have \emph{non-orthogonal} directions of amplification which interact to produce error growth. 
For asymptotically stable maps these excursions of growth, called \emph{transients}, eventually die away.
Prior work has frequently targeted \emph{stability} as a desirable 
property that guarantees asymptotic decay of error.
Rollouts, however, are evaluated at tens to hundreds of steps, precisely the
medium-horizon regime where transients dominate: stable maps can still produce large transient amplification at the
horizons that matter in practice.

Normality of individual Jacobians, however, is only part of the story.
Nonlinear models have \emph{state-dependent} Jacobians for which 
the directions of error amplification may rotate with time.
When these directions are misaligned across steps, error components can amplify even when no individual step does so on its own.
Aligning successive directions can mitigate this amplification; and a sufficient condition for alignment is that Jacobians across steps be \emph{commutative}. 
Commuting normal maps share common orthogonal eigenbases, 
which suppresses transient amplification at every horizon.
Non-commuting, non-normal Jacobians violate both conditions simultaneously,
and unregularized
networks trained on non-normal dynamics tend to learn Jacobians with this combination of
properties.

To control transient amplification directly, we propose \emph{commutativity
regularization}, inclusive of two auxiliary training losses applied over latent space operators.
A \emph{commutator penalty} regulates commutativity of consecutive Jacobians,
targeting the cross-step compounding of transients; a \emph{normality penalty}
discourages individual Jacobians from generating large transients in the first place.
The two are complementary: penalizing non-commutativity alone still permits large within-step
transients, and penalizing non-normality alone leaves cross-step compounding
uncontrolled.
Their combination drives the learned dynamics toward a regime in which errors decay
at the rate set by the Jacobian spectra with controlled intermediate amplification.
At inference, the rollout procedure is unchanged and there is no additional cost.

We validate the method on the following settings:
the 1D KdV equation, the 2D Barotropic Vorticity Equation, sea surface temperature data,
and
fine-tuning of the pretrained FourCastNet global weather model on ERA5 atmospheric data.
In each setting, commutativity regularization improves long-horizon rollout accuracy without modifying the inference procedure.

\paragraph{Contributions.}
We make the following contributions.
\begin{enumerate}[label=(\roman*)]
  \item We identify non-normal, non-commuting latent Jacobians as a structural driver
    of transient amplification in autoregressive neural operators, one not addressed by stability.
  \item We propose commutativity regularization: a commutator penalty together with a
    normality penalty, both estimated via Jacobian-vector products with zero inference-time overhead.
  \item We show a propagator bound under approximate commutativity and normality that
    replaces the per-step spectral norm with the shared spectral radius as the effective
    decay rate, a strict improvement whenever the Jacobians are non-normal.
  \item We validate the method on settings spanning chaotic, geophysical,
    integrable, and synthetic dynamical systems as well as large-scale atmospheric
    forecasting, obtaining consistent long-horizon accuracy gains over thousands of steps, including in an OOD setting. 
\end{enumerate}

\label{sec:method}

\section{Autoregressive Neural Operators and Transient Error}
\label{sec:background:rollout}

Let $F_\theta : \mathcal{X} \to \mathcal{X}$ be an operator trained to advance a dynamical state by one timestep.
At inference the model is applied \emph{autoregressively}: given an initial condition
$x_0 \in \mathcal{X}$, the rollout is
\begin{equation}
  \hat{x}_0 = x_0, \qquad
  \hat{x}_{t+1} = F_\theta(\hat{x}_t), \quad t = 0,1,\ldots,T-1.
  \label{eq:rollout}
\end{equation}
The model is typically trained on a single-step mean-squared loss
$\mathcal{L}_\mathrm{pred} = \mathbb{E}_{t}\bigl[\|F_\theta(x_t) - x_{t+1}\|^2\bigr]$,
which does not expose it to its own prediction errors during training.

\paragraph{Error propagation.}
Let $\varepsilon_t = x_t - \hat{x}_t$ denote the prediction error at step $t$.
Linearising $F_\theta$ around the predicted trajectory gives
\begin{equation}
  \varepsilon_{t+1} \approx J_t\,\varepsilon_t,
  \qquad
  J_t \;=\; \frac{\partial F_\theta}{\partial x}\Bigr|_{\hat{x}_t},
  \label{eq:linearised-error}
\end{equation}
so that after $T$ steps $\varepsilon_T \approx \Phi_T\,\varepsilon_0$, where the
\emph{propagator} is the ordered Jacobian product
\begin{equation}
  \Phi_T \;=\; J_{T-1}\,J_{T-2}\,\cdots\,J_0.
  \label{eq:propagator}
\end{equation}
The rollout error is therefore bounded by
$\|\varepsilon_T\|_2 \le \|\Phi_T\|_2\,\|\varepsilon_0\|_2$.
\looseness=-1Controlling the spectral norm of individual Jacobians,
$\|J_t\|_2 \le \alpha < 1$, yields $\|\Phi_T\|_2 \le \alpha^T$
by submultiplicativity.
However, this estimate conceals a key problem:
for \emph{non-normal} Jacobians, the intermediate propagator norm $\text{sup}_{t\le T} \|\Phi_t\|_2$ can greatly exceed $ \|\Phi_T\|_2$
due to ordering effects between non-commuting factors. 
This intermediate growth can occur even when each individual factor is contractive and the system is asymptotically stable \citep{trefethen2005spectra}.
This is the transient amplification mechanism we target.

\begin{contribbox}
\vspace{2pt}
Suppressing transient amplification of error across a rollout improves long-horizon accuracy in dynamical models with no inference-time overhead.
\emph{Commutativity regularization} achieves this by encouraging normality and commutativity of Jacobians around rollout trajectories.
\end{contribbox}

\subsection{Non-Normal Matrices and Transient Growth}
\label{sec:background:nonnormal}

A matrix $A$ is \emph{normal} if $A^\top A = A A^\top$.
For normal matrices the eigenvectors are orthonormal, the spectral radius
equals the spectral norm, $\rho(A) = \|A\|_2$, and there is no transient
growth: $\|A^t\|_2 = \rho(A)^t$ for all $t \ge 0$.

\begin{figure}[t]%
  \vspace{-1em}
  \centering
  \includegraphics[width=0.85\textwidth]{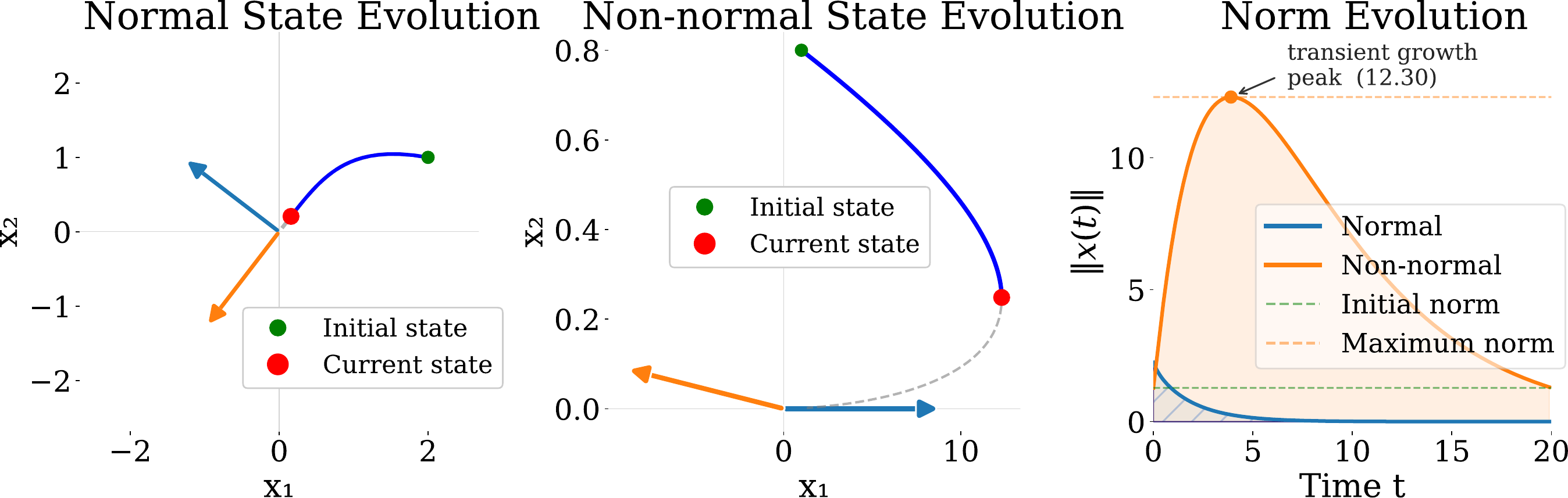}%
  \caption{Normal vs.\ non-normal transient growth on $2\times2$ stable systems,
           \emph{Left:} normal system, eigenvectors are
           orthogonal, %
           \emph{Centre:} non-normal system, eigenvectors are nearly
           parallel, %
            \emph{Right:} norm $\|x(t)\|$ showing transient growth. 
           }
  \label{fig:nonnormal}
\end{figure}%

For a non-normal matrix, by contrast, it may be that $\rho(A) \ll \|A\|_2$ and $\|A^t\|_2$ can vastly exceed
$\rho(A)^t$ for intermediate $t$ (transient growth), even though asymptotically $\lim_{t \rightarrow \infty} \|A^t\|^{1/t}_2 = \rho(A)$ \citep{horn1985}. 
The same phenomenon arises in products of \emph{distinct} matrices.
For $A_0,\ldots,A_{T-1}$ each satisfying $\|A_t\|_2 \le \alpha$, submultiplicativity gives
\begin{equation}
  \|A_{T-1}\cdots A_0\|_2 \;\le\; \alpha^T.
  \label{eq:submult}
\end{equation}
When the matrices are non-commuting, and are not simultaneously diagonalizable, the bound can be highly non-tight, and the product can exhibit significant transient amplification relative to its asymptotic behavior, even when all $\rho(A_t) < 1$.

\paragraph{Illustration.}
Figure~\ref{fig:nonnormal} contrasts the two behaviours on $2\times2$ toy systems.
Both matrices have all eigenvalues with negative real part, so both systems are
asymptotically stable.
The \emph{normal} system  has orthogonal
eigenvectors: the state decays smoothly along the eigendirections and the
norm $\|x(t)\|$ decreases monotonically.
The \emph{non-normal} system has nearly
parallel eigenvectors: the initial state contains large, nearly-cancelling
projections onto them.
During the transient phase these components briefly reinforce one another and
the norm \emph{grows}, reaching a peak well above the initial value, before
eventually decaying.
This transient amplification is invisible to a purely eigenvalue-based stability
analysis \citep{trefethen2005spectra}, yet it is the mechanism that compounds across time steps in
autoregressive rollout for neural operators trained on spatio-temporal data.

\subsection{Temporal Commutativity as the LTV Analogue of Normality}
\label{sec:background:ltv}

The system $x_{t+1} = J_t\,x_t$ is a linear time-varying (LTV) system \citep{rugh1981nonlinear}.
For linear time-invariant (LTI) systems (static $J_t = A$), normality ensures $\|A^T\|_2 = \rho(A)^T$,
providing tight control of long-horizon amplification.
For LTV systems the direct analogue is \emph{temporal commutativity} in addition to normality:

\begin{table}[t]%
  \centering
  \caption{LTI normality concepts and  LTV analogues.}
           
  \label{tab:ltv_normality}
  \small
  
  \begin{tabular}{ll}
    \toprule
    LTI (static $A$) & LTV (time-varying $\{J_t\}$) \\
    \midrule
    $A^\top A = A A^\top$ (normality) &
      $[J_{t_1}, J_{t_2}] = 0\;\forall\,t_1,t_2$ (normality and commutativity) \\
    Orthonormal eigenvectors &
      Shared orthonormal eigenbasis \\
    $\|A^T\|_2 = \rho(A)^T$ &
      $\|\Phi_T\|_2 \le \rho^T \le \alpha^T$ \\
    No transient growth &
      No ordering-induced transients \\
    \bottomrule
  \end{tabular}%
  \vspace{1mm}
\end{table}
if $[J_{t_1}, J_{t_2}] \equiv J_{t_1} J_{t_2} - J_{t_2} J_{t_1} = 0$ for all $t_1, t_2$,
then the matrices share a common eigenbasis~\citep{horn1985} and the propagator satisfies
\begin{equation}
  \Phi_T \;=\; J_{T-1}\cdots J_0 \;=\; U\,\Lambda_{T-1}\cdots\Lambda_0\,U^\top,
  \label{eq:commuting-product}
\end{equation}
where $U$ is the shared eigenbasis.
Consequently $\|\Phi_T\|_2 \le \rho^T$,
where $\rho = \max_{i,t}|\lambda_i(J_t)| \le \alpha$.
This recovers a tight LTI-like bound, with the spectral radius $\rho$
(which may be strictly less than the spectral norm $\alpha$ for non-normal matrices)
controlling growth rather than the norm.
The correspondence between LTI normality and LTV temporal commutativity
is in Table~\ref{tab:ltv_normality}.

\subsection{Error Bound Under Approximate Commutativity and Normality}
\label{sec:theory}

We now state the key theoretical result motivating our regularizer.
The ideal case is when all Jacobians are simultaneously, orthogonally diagonalizable.
The following theorem characterises how far the propagator deviates from this ideal
when the two conditions, all-pairs commutativity and
per-step normality, hold approximately.
The proof is deferred to Appendix \ref{app:method:prop-proof}.

\begin{restatable}[Propagator bound under approximate commutativity and normality]{theorem}{proptheorem}\label{thm:propagator}
  Let $J_0, J_1, \ldots, J_{T-1} \in \mathbb{R}^{n \times n}$ satisfy
  \begin{enumerate}[label=(\roman*)]
    \item $\|J_t\|_2 \le \alpha < 1$ \emph{(contraction)} for all $t$,
    \item $\rho(J_t) \le \rho \le \alpha$ \emph{(spectral radius bound)} for all $t$,
    \item $\|[J_s, J_t]\|_F \le \varepsilon$ for all $0 \le s, t \le T-1$
          \emph{(all-pairs approximate commutativity)}, and
    \item $\|J_t^\top J_t - J_t J_t^\top\|_F \le \eta$ \emph{(approximate normality)}
          for all $t$.
  \end{enumerate}
  Then the propagator satisfies
  $  \|\Phi_T\|_2 \;\le\; \rho^T \;+\; 2T\alpha^{T-1}\,\delta(\varepsilon,\eta),
  $
  where $\delta(\varepsilon,\eta) \to 0$ as $\varepsilon,\eta \to 0$.
\end{restatable}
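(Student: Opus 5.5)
We compare the family $\{J_t\}$ with a nearby family of commuting normal matrices and control the difference with a telescoping sum. The target is a set of matrices $N_t = U \Lambda_t U^\top$ with a single orthogonal $U$ and diagonal (or, over $\mathbb{R}$, block-diagonal with $2\times 2$ rotation-scaling blocks) $\Lambda_t$. These should satisfy three conditions:
\begin{enumerate}[label=(\alph*)]
  \item $\|J_t - N_t\|_2 \le \delta(\varepsilon,\eta)$ for all $t$, with $\delta\to 0$ as $\varepsilon,\eta\to 0$;
  \item $\|N_t\|_2 = \rho(N_t) \le \rho$;
  \item $\|N_t\|_2 \le \alpha$.
\end{enumerate}
Given such $N_t$, condition (b) makes $\Psi_T := N_{T-1}\cdots N_0 = U\Lambda_{T-1}\cdots\Lambda_0 U^\top$ satisfy $\|\Psi_T\|_2\le\rho^T$, exactly as in \eqref{eq:commuting-product}.

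The telescoping identity is
\[
\Phi_T-\Psi_T=\sum_{k=0}^{T-1} J_{T-1}\cdots J_{k+1}\,(J_k-N_k)\,N_{k-1}\cdots N_0 .
\]
Each summand has norm at most $\alpha^{T-1-k}\,\delta\,\alpha^{k} = \alpha^{T-1}\delta$, using hypothesis (i) for the $J$ factors and (c) for the $N$ factors. This gives $\|\Phi_T\|_2\le \rho^T + T\alpha^{T-1}\delta$. The extra factor $2$ in the statement is slack. It absorbs the case where the $N_t$ only satisfy $\|N_t\|_2\le \alpha+\delta$, or where we first replace the $J_t$ by intermediate normal matrices and then by commuting ones, incurring two approximation errors per step.

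To build the $N_t$, there are two sources of defect.

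\textbf{Normality.} For a single matrix with $\|A^\top A-AA^\top\|_F\le\eta$, a Lin-type theorem on almost normal matrices gives a normal $\tilde J_t$ with $\|J_t-\tilde J_t\|\le \omega_n(\eta)$, where $\omega_n\to0$. A cruder, explicitly quantitative route works through the Schur form $J_t = Q(D+R)Q^\top$. The Henrici departure from normality $\|R\|_F$ is controlled by the commutator: $\|R\|_F^2 \le \sqrt{(n^3-n)/12}\,\|J_t^\top J_t-J_tJ_t^\top\|_F$. This yields $\|J_t-QDQ^\top\|\le c_n\eta^{1/2}$, with eigenvalues preserved, so $\rho(\tilde J_t)=\rho(J_t)\le\rho$.

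\textbf{Commutativity.} The perturbed $\tilde J_t$ still nearly commute: $\|[\tilde J_s,\tilde J_t]\|_F\le \varepsilon + 4\alpha\,\omega_n(\eta)$. For nearly commuting normal matrices we then need a joint approximation by exactly commuting normal matrices. I expect this step to be the main obstacle. For two or more general almost-commuting matrices such results fail or are only qualitative (Voiculescu's examples), so $\delta$ will be dimension-dependent and merely shown to tend to $0$. The statement only requires $\delta\to0$, which leaves room for this.

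My first attempt uses a compactness argument in fixed dimension $n$ with fixed $T$. Suppose no $\delta(\varepsilon,\eta)\to0$ existed. Then there would be sequences of $T$-tuples in the compact set $\{\|J\|\le\alpha\}$ with $\varepsilon_k,\eta_k\to0$ but distance bounded below from the closed set of commuting normal tuples with spectral radius $\le\rho$ and norm $\le\alpha$. A convergent subsequence has a limit that is exactly normal and exactly commuting. Its spectral radius is $\le\rho$ by continuity of eigenvalues, and for a normal matrix this gives norm $\le\rho\le\alpha$. That limit therefore lies in the target set, a contradiction. This delivers (a)--(c) simultaneously with a qualitative $\delta$ (depending on $n,T,\alpha,\rho$), and the telescoping bound completes the proof.

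If a quantitative $\delta$ is wanted, I would replace the compactness step by simultaneous block-diagonalisation. The approach is to cluster the spectrum of one normal $\tilde J_{t_0}$, use Davis--Kahan to show the other $\tilde J_t$ nearly preserve its spectral subspaces, compress them onto those subspaces, and recurse.
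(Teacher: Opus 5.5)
Your proof is correct and shares the paper's skeleton. Both approximate $\{J_t\}$ by a commuting normal family with eigenvalues bounded by $\rho$, then split the difference of the two products into $T$ terms of norm at most $\alpha^{T-1}\delta$. The substantive difference is the approximation step. The paper only asserts that (iii)--(iv) place $\{J_t\}$ within $\delta(\varepsilon,\eta)$ of a simultaneously orthogonally diagonalisable family. Your compactness argument actually proves this, at the cost of a non-explicit $\delta$ depending on $n,T,\alpha,\rho$. It also correctly allows $2\times2$ blocks over $\mathbb{R}$, which the paper's ``orthogonal $U$, diagonal $\Lambda_t$'' overlooks.

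Your telescoping identity is exact. The paper instead expands ``to first order'' with an $O(\delta^2)$ remainder, which is in fact zero because its expansion is itself a telescoping identity. It then bounds $\|\Phi'_T\|_2$ through $(\rho+\delta)^T\le\rho^T+T\rho^{T-1}\delta$. That inequality is false for $T\ge 2$, and it is unnecessary because $|\lambda_{i,t}|\le\rho$ already gives $\rho^T$. This detour is the source of the paper's factor $2$; your version shows that $T\alpha^{T-1}\delta$ suffices.

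You can also drop the step you identify as the main obstacle. Property (b) alone yields $\|\Psi_T\|_2\le\prod_t\|N_t\|_2\le\rho^T$ by submultiplicativity, since a normal matrix has spectral norm equal to its spectral radius. No shared eigenbasis is needed. Your Schur--Henrici construction $N_t=QDQ^{*}$ therefore already satisfies (a)--(c), with the explicit choice $\delta=\bigl((n^3-n)/12\bigr)^{1/4}\eta^{1/2}$, independent of $\varepsilon$ and $T$. Complex $N_t$ cause no trouble in the telescoping estimate. So hypothesis (iii) is not needed for this bound, contrary to the paper's remark that normality alone is insufficient.
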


\subsection{Commutativity and Normality Regularization}
\label{sec:method:comm-reg}

Theorem~\ref{thm:propagator} provides two auxiliary losses that penalize the mechanisms driving the growth of $\|\Phi_t\|_2$.

\begin{figure}[t]%
  \centering
  \includegraphics[width=0.8\textwidth]{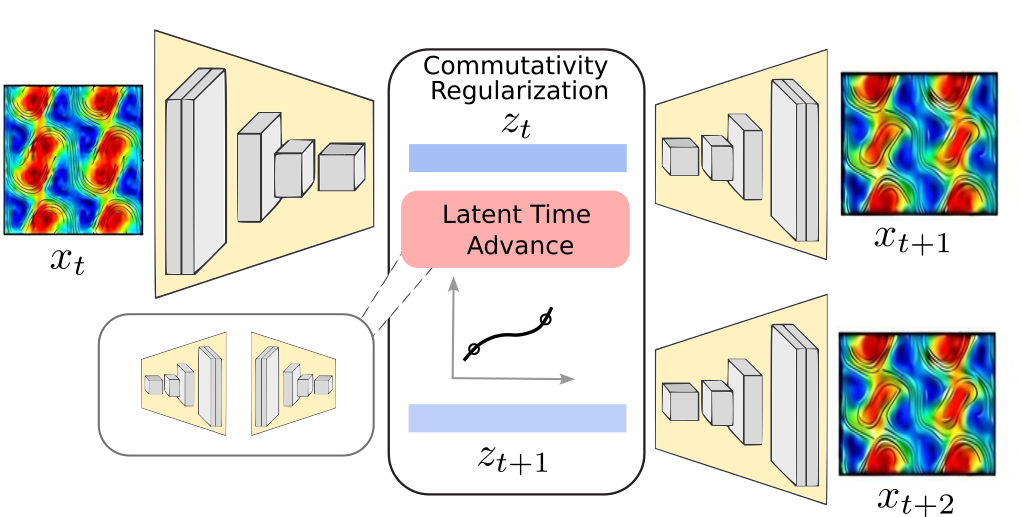}
  \caption{Latent advance architecture used by commutativity regularization. Other configurations possible (See Appendix~\ref{app:reg-config}.}
  \label{fig:reg-arch-latent}
\end{figure}

\paragraph{Latent-space Jacobians.}
For a given hidden representation $z_t$ of input $x_t$ for a neural model
define the \emph{latent advance map} $G_\theta : \mathbf{z}_t \mapsto \mathbf{z}_{t+1}$
as the bottleneck-to-bottleneck dynamics, and let
$\mathbf{J}_t = \partial G_\theta / \partial \mathbf{z}\big|_{\mathbf{z}_t}$
denote its Jacobian.
For a UNet we use the bottleneck feature, for FNO we use the middle layer features (Figure \ref{fig:reg-arch-latent}).
Regularizing latent map Jacobians prevents the model from introducing further non-normality in representation space.  
For large models (such as FourCastNet) we regularize a selected block as shown in Figure \ref{fig:reg-arch-block}.

\paragraph{Commutator penalty.}
We estimate the squared Frobenius norm of the commutator
$[\mathbf{J}_{t+1}, \mathbf{J}_t]$
using a single random probe $\mathbf{v} \sim \mathcal{N}(0,\,I)$ per minibatch
(Hutchinson estimator):
\begin{equation}
  \mathcal{L}_\mathrm{comm}
  \;=\;
  \mathbb{E}_{\mathbf{v}}\!\left[
    \bigl\|
      \mathbf{J}_{t+1}\,\mathbf{J}_t\,\mathbf{v}
      \;-\;
      \mathbf{J}_t\,\mathbf{J}_{t+1}\,\mathbf{v}
    \bigr\|^2
  \right].
  \label{eq:L_comm}
\end{equation}

\paragraph{Normality penalty.}
To suppress within-step transient amplification we additionally penalise
the normality defect of each $\mathbf{J}_t$:
\begin{equation}
  \mathcal{L}_\mathrm{norm}
  \;=\;
  \mathbb{E}_{\mathbf{v}}\!\left[
    \bigl\|
      \mathbf{J}_t^\top \mathbf{J}_t\,\mathbf{v}
      \;-\;
      \mathbf{J}_t\, \mathbf{J}_t^\top\,\mathbf{v}
    \bigr\|^2
  \right].
  \label{eq:L_norm}
\end{equation}
This penalises $\|J_t^\top J_t - J_t J_t^\top\|_F^2$ in expectation.
Together with $\mathcal{L}_\mathrm{comm}$, it drives $\mathbf{J}_t$ toward the ideal
case of Theorem~\ref{thm:propagator}: contractive, normal, and temporally commutative.

\paragraph{Training objective.}
The full training loss is
\begin{equation} 
 \mathcal{L}
  \;=\;
  \mathcal{L}_\mathrm{pred}
  \;+\;
  \lambda_\mathrm{c}\,\mathcal{L}_\mathrm{comm}
  \;+\;
  \lambda_\mathrm{n}\,\mathcal{L}_\mathrm{norm}, 
  \label{eq:total-loss}
\end{equation}
where $\mathcal{L}_\mathrm{pred}$ is the standard single-step MSE loss.

\subsection{Efficient JVP-Based Implementation}
\label{sec:method:impl}

\paragraph{Matrix-free computation.}
All four JVPs required for $\mathcal{L}_\mathrm{comm}$ and
the two JVPs plus one VJP required for $\mathcal{L}_\mathrm{norm}$
are computed using \texttt{torch.func.jvp} and \texttt{torch.func.vjp}
(PyTorch 2.0+) \citep{paszke2019pytorch}.
No explicit $n \times n$ Jacobian matrix is stored at any point.
The sequence of operations for $\mathcal{L}_\mathrm{comm}$ is:
\begin{align}
  \mathbf{u}  &= \mathbf{J}_{t+1}\,\mathbf{v},   &
  \mathbf{p}  &= \mathbf{J}_t\,\mathbf{u}
                = \mathbf{J}_t\mathbf{J}_{t+1}\,\mathbf{v},
  \label{eq:jvp-order1}
  \\
  \mathbf{w}  &= \mathbf{J}_t\,\mathbf{v},        &
  \mathbf{q}  &= \mathbf{J}_{t+1}\,\mathbf{w}
                = \mathbf{J}_{t+1}\mathbf{J}_t\,\mathbf{v},
  \label{eq:jvp-order2}
\end{align}
after which $\mathcal{L}_\mathrm{comm} = \|\mathbf{p} - \mathbf{q}\|^2$.
Each \texttt{jvp} call is a single forward pass over the latent-advance
sub-network, applied to the flattened latent.
For $\mathcal{L}_\mathrm{norm}$:
\begin{equation}
  \mathbf{J}_t\,\mathbf{v}
  \;\xrightarrow{\;\texttt{vjp}\;}
  \mathbf{J}_t^\top\!\left(\mathbf{J}_t\,\mathbf{v}\right),
  \qquad
  \mathbf{J}_t^\top\,\mathbf{v}
  \;\xrightarrow{\;\texttt{jvp}\;}
  \mathbf{J}_t\!\left(\mathbf{J}_t^\top\,\mathbf{v}\right),
  \label{eq:jvps-norm}
\end{equation}
yielding $(\mathbf{J}_t^\top\mathbf{J}_t - \mathbf{J}_t\mathbf{J}_t^\top)\mathbf{v}$
as required by~\eqref{eq:L_norm}.

\section{Related Work}
\label{sec:related}

\textbf{Long-horizon error in autoregressive neural simulators.}
The dominant framing of rollout drift is \emph{distributional shift}: a model trained on ground-truth states must, at inference, consume its own noisy predictions.
PDE-Refiner~\citep{lippe2023pderefiner} addresses this by training a denoising network that iteratively refines each rollout step, and Spectral-Refiner~\citep{cao2025spectralrefiner} fine-tunes an FNO to correct mis-resolved Fourier modes;
conditional diffusion approaches~\citep{kohl2024diffusion} go further, running many denoising iterations per step in exchange for trajectory-level error correction.
These methods target a symptom of the problem rather than its structural mechanism and impose substantial inference-time cost.
The closest training-time precursor is~\citet{mccabe2023stability}, which penalises the per-step spectral norm of the operator's Jacobian to enforce Lipschitz contraction, focusing on stability rather than transient amplification which is the goal of this paper.

\textbf{Linear-operator views and non-normal dynamics.}
Koopman approaches~\citep{mezic2005,williams2015edmd,azencot2020koopman} model the latent dynamics with a single \emph{linear} operator $K$, for which all Jacobians along a trajectory commute trivially. 
Commutativity regularisation can be read as a relaxation of the Koopman view: it asks only that the time-varying latent Jacobians \emph{share} an orthogonal eigenbasis with state-dependent eigenvalues, recovering the favourable spectral-radius propagator bound while permitting genuinely nonlinear dynamics.
The phenomenon that non-normal operators can amplify perturbations even when individually contractive originates in hydrodynamic stability theory and pseudospectra~\citep{trefethen1993,schmid2007,trefethen2005spectra,kreiss1962};
we transfer this lens from physical operators to the learned Jacobians of neural simulators and turn it into a training signal.

\textbf{Multi-step and unrolled training.}
A natural alternative to single-step training is to optimise the rollout itself \citep{list_differentiability_2025}.
The pushforward trick~\citep{brandstetter2022mpnnpde} backpropagates through one rollout step while detaching the gradient at an earlier prediction.
Such schemes mitigate covariate shift and bring training closer to the deployment regime, but their gradient signal degrades rapidly with the unroll length: \citet{scieur2023unrolling} show that differentiating through long unrolls suffers from a "curse of unrolling" in which the gradient becomes either vanishing or ill-conditioned, limiting the practical horizon over which they can shape the operator.
Commutativity regularisation acts on the same composition of Jacobians but estimates their structural defect locally, with two-step probes, side-stepping long-unroll backpropagation entirely.
The two ideas are orthogonal and can be combined: a short pushforward window controls covariate shift while comm-reg controls the per-step structural amplification.

\textbf{Training-time noise injection.}
An alternative stabilisation line injects perturbations into rollout
inputs during training, so that the model learns to recover from its
own errors. Originally introduced for graph-network particle
simulators~\citep{sanchezgonzalez2020gns} and mesh-based fluid
simulators~\citep{pfaff2021meshgraphnets}, the technique has since
been applied to turbulence emulators~\citep{stachenfeld2022coarse} to
stabilise rollouts well beyond the training horizon.
Like multi-step training, these schemes target the input distribution
rather than the structural Jacobian properties that drive transient
amplification, and are orthogonal to commutativity regularisation.

\section{Experiments}
\label{sec:experiments}

We evaluate commutativity regularisation across four spatio-temporal forecasting settings spanning 1D and 2D, integrable and chaotic, and synthetic and real data: the 1D KdV equation, the 2D barotropic vorticity equation, fine-tuning of the pretrained FourCastNet weather model on ERA5, and an OISST sea surface temperature benchmark.
In each setting we compare a baseline trained with the standard prediction loss against an otherwise-identical model trained with the same objective plus the commutator and normality penalties of Eq.~\eqref{eq:total-loss}; the rollout procedure is unchanged at inference, and we evaluate on horizons of hundreds to thousands of steps, well beyond the training window.
Across architectures (U-Net~\citep{ronneberger2015unet}, FNO~\citep{li2021fno}, U-FNO~\citep{wen2022u}, and FourCastNet's AFNO backbone~\citep{pathak2022fourcastnet}) the regularisation consistently extends the horizon over which the learned operator stays close to ground-truth trajectories, with the largest gains where transient amplification is most pronounced: deep rollouts and out-of-distribution initial conditions.

\subsection{Korteweg--de Vries equation}
\label{sec:exp:kdv}

The KdV equation is a clean probe for our central claim. It is integrable,
so any long-horizon divergence between two rollouts on the same initial
condition is attributable to the learned operator rather than to chaotic
loss of predictability. We train every model on a short window of
\emph{single}-soliton dynamics ($200$ rollout steps, $10$\,s of physical
time) and then evaluate the autoregressive rollout out to $25\times$ that
horizon, both \emph{in-distribution} and on an \emph{out-of-distribution}
test set whose initial conditions contain up to three interacting
solitons, a regime never seen during training.

We consider four backbones:
a 1D UNet, FNO, U-FNO,
PDE-Refiner~\citep{lippe2023pderefiner} as a representative
inference-time correction baseline. 
Every regime is
trained with a one-step MSE loss on the same $256$-trajectory single-soliton dataset. 
Results are shown in Figure \ref{fig:kdv_rollout_nmse} and appendix 
Tables~\ref{tab:kdv_app_unet_id}, \ref{tab:kdv_app_unet_ood}, and  \ref{tab:kdv_app_fno_full}.

\paragraph{In-distribution: commutativity reg.\ wins at long horizons.}
At very short leads, the unregularised UNet baseline is
\emph{slightly more accurate} than UNet+CR
: the regulariser pays a small
single-step cost. 
This ordering reverses already inside the training
window and the gap then widens monotonically. 
By step 1000 UNet+CR shows a $5\times$ improvement over the baseline. 
PDE-Refiner is the strongest
unregularised baseline at the shorter horizons, it is more accurate
than UNet+CR up to $\sim$$1000$ steps, paying $5\times$ inference-time
cost, but it eventually accumulates error and at
step $5000$, it is $5\times$ worse than the training-time regulariser despite
its much inference cost.

\paragraph{Out-of-distribution: the gap grows on multi-soliton ICs.}
On the multi-soliton OOD split the unregularised UNet and PDE-Refiner
both experience severe degradation, failing to maintain rollout coherence.
UNet+CR is essentially unaffected by the distribution shift.
The regularised model is the only one that stays
on a coherent multi-soliton trajectory at the longest horizons. 
The
qualitative space-time visualization is in
Figures \ref{fig:kdv-indist} and \ref{fig:kdv-ood}.
\begin{figure}[t]
  \centering
  \includegraphics[width=0.33\linewidth]{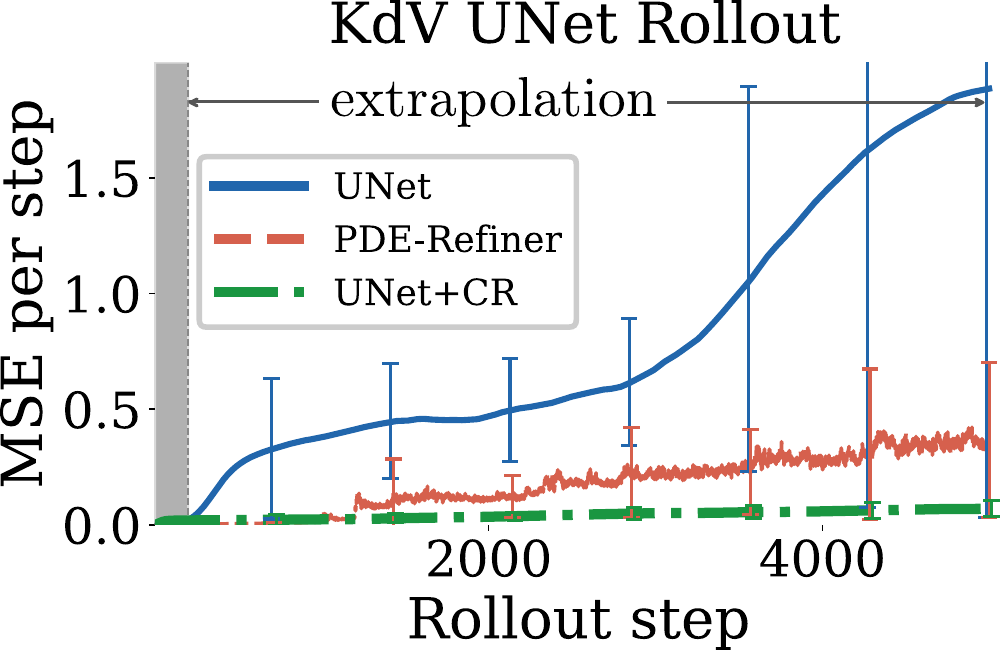}
  \includegraphics[width=0.31\linewidth]{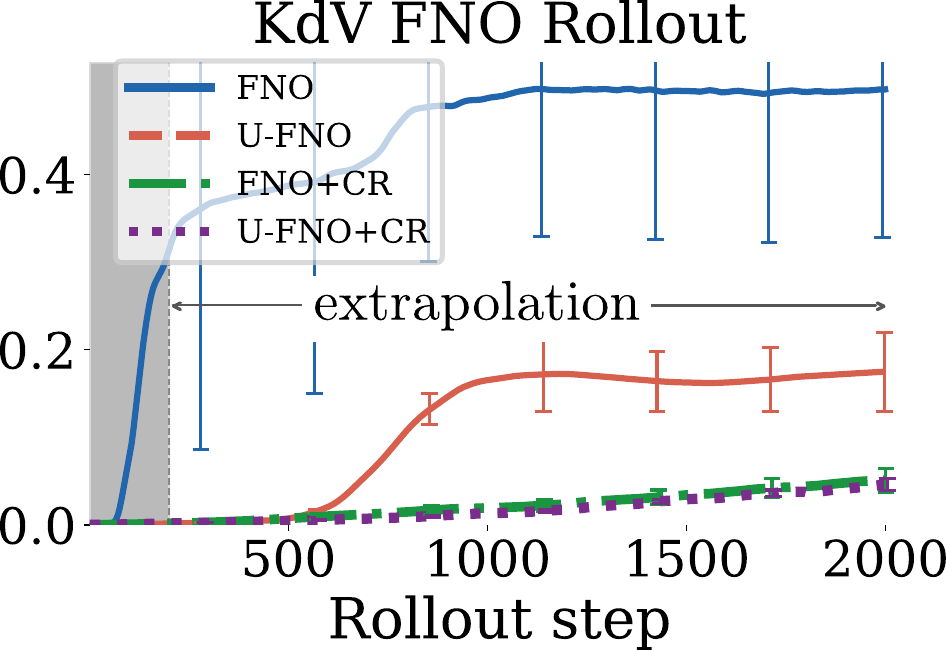}
  \includegraphics[width=0.34\linewidth]{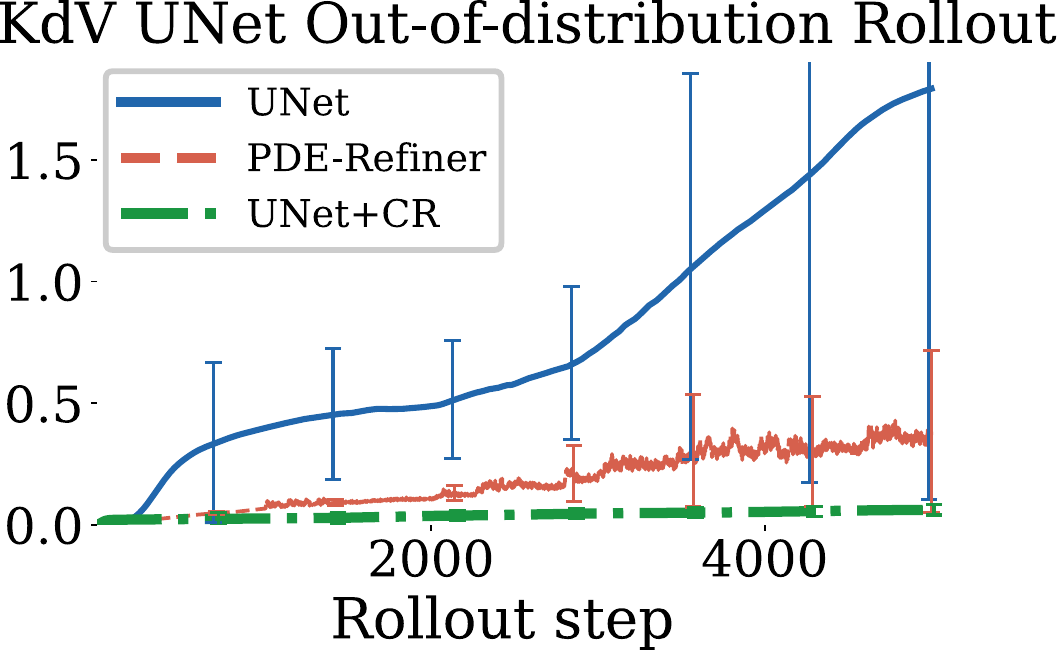}
  \caption{KdV rollout nMSE vs.\ time, averaged over 50 held-out test
           trajectories. The dashed line marks the training horizon
           ($t = 10$\,s, $200$ steps); everything to its right is
           extrapolation in rollout length.
           onward.
           (Cf. Tables~\ref{tab:kdv_app_unet_id}, \ref{tab:kdv_app_unet_ood}, \ref{tab:kdv_app_fno_full})}
  \label{fig:kdv_rollout_nmse}
\end{figure}

\paragraph{Spectral backbones: comm.\ reg.\ recovers an unstable rollout.}
On the same data, the FNO and U-FNO backbones exhibit
 a stronger version of the same effect
  (Figure~\ref{fig:kdv_rollout_nmse}, middle; detailed 
results in Table~\ref{tab:kdv_app_fno_full}).

\begin{wrapfigure}{r}{0.55\textwidth}
\vspace{-1.5em}
  \includegraphics[width=0.55\textwidth]{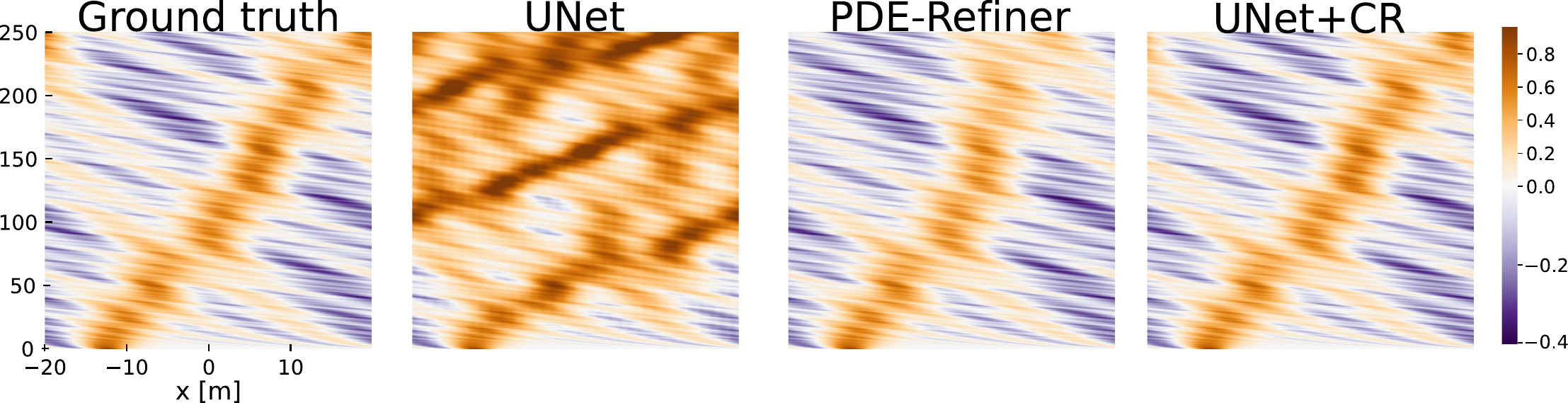}
  \vspace{-1.7em}
  \caption{KdV UNet variant rollout }
  \label{fig:kdv-indist}
  \vspace{0.5em}
  \includegraphics[width=0.55\textwidth]{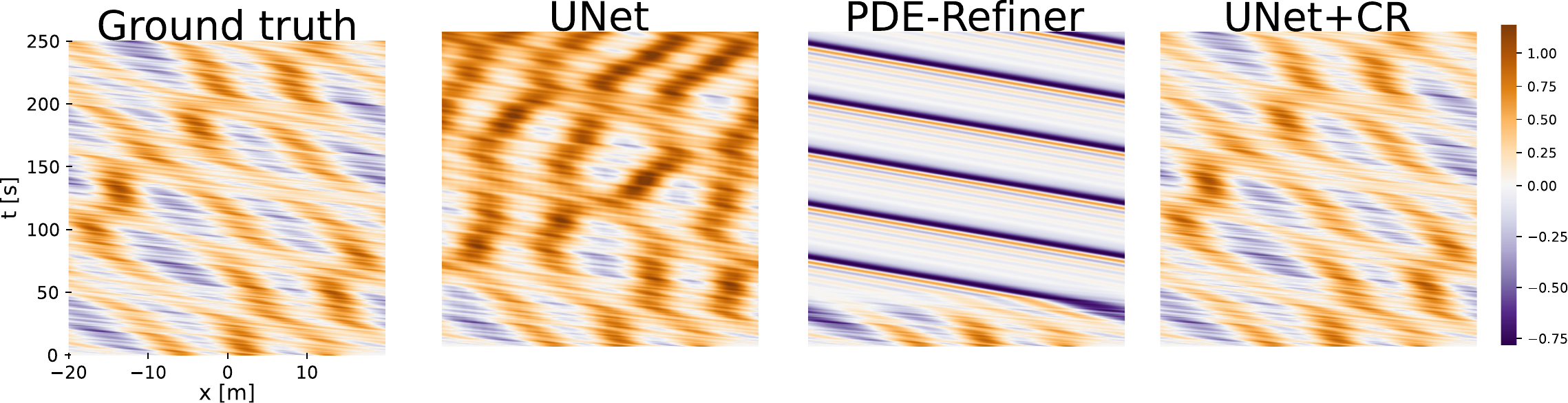}
  \vspace{-1.5em}
  \caption{KdV UNet variant OOD rollout}
  \label{fig:kdv-ood}
  \vspace{0.5em}
  \includegraphics[width=0.55\textwidth]{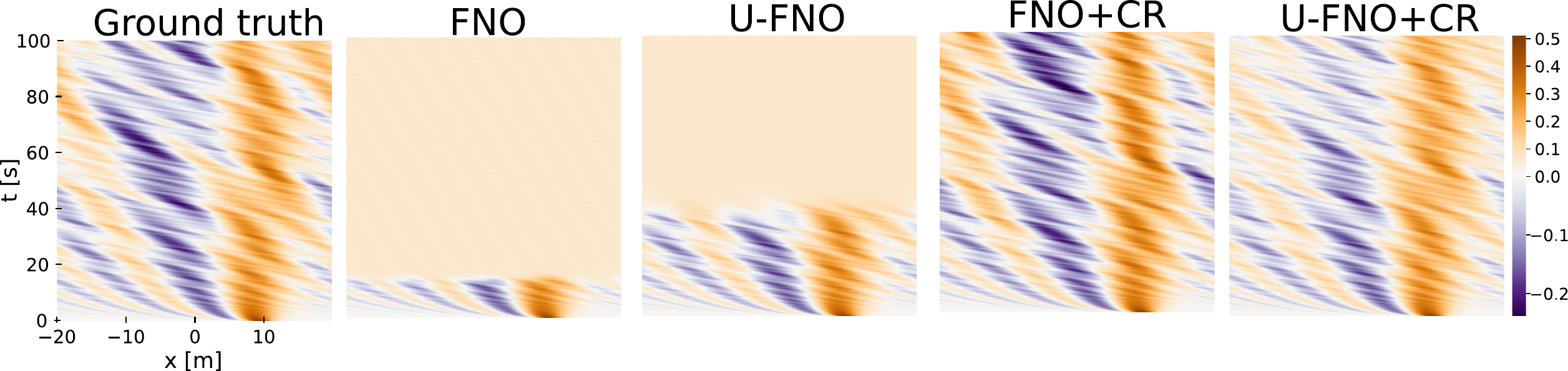}
  \vspace{-1.5em}
  \caption{KdV FNO variant Rollouts }
  \label{fig:kdv_rollout_fno}
  \vspace{-1em}
\end{wrapfigure}
The unregularised FNO has
an excellent one-step MSE 
on the validation set, $\sim$$4\times$ better per step than its regularised
counterpart) yet its autoregressive rollout becomes unstable around
step $100$ 
and loses coherence already by the end of the
training window. 
The U-FNO baseline holds longer, 
until step $500$, before collapsing. 
Both commutativity-regularised variants stay bounded throughout the
$2000$-step evaluation window,
comparable to the regularised UNet at the same horizon and is reached on a backbone that diverges entirely without the
penalty. 
On the spectral architectures the regulariser thus does more
than improve a stable baseline: it converts a divergent autoregressive
rollout into a bounded one, exactly the qualitative behaviour the
propagator analysis of Section~\ref{sec:background:rollout} predicts
when non-normal Jacobian products are left unconstrained.

\paragraph{Summary.}
Across all four backbones, commutativity regularisation trades a small
amount of one-step accuracy for a long-horizon rollout that is bounded
and, beyond the training window, between $5\times$ and several orders
of magnitude more accurate than its unregularised counterpart. It
matches PDE-Refiner at intermediate horizons without the latter's
$5\times$ inference-time overhead, and outperforms it past $\sim$$2000$
steps. The OOD experiment shows that the gain is robust to a regime
shift in the initial conditions, suggesting that the regulariser shapes
a structural property of the learned operator rather than memorising a
training distribution. Per-trajectory space--time snapshots, the full
per-horizon nMSE numbers, and the architectural and training details
are in Appendix~\ref{app:exp:kdv}.

\subsection{Barotropic Vorticity Equation}
\label{sec:exp:bve}
We evaluate on a chaotic 2D fluid PDE: the
barotropic vorticity equation (BVE). The system is dissipative and chaotic: small perturbations
amplify exponentially in time,  so any improvement  must model intrinsic divergence. 

\begin{wrapfigure}{r}{0.4\textwidth}
  \includegraphics[width=0.4\textwidth]{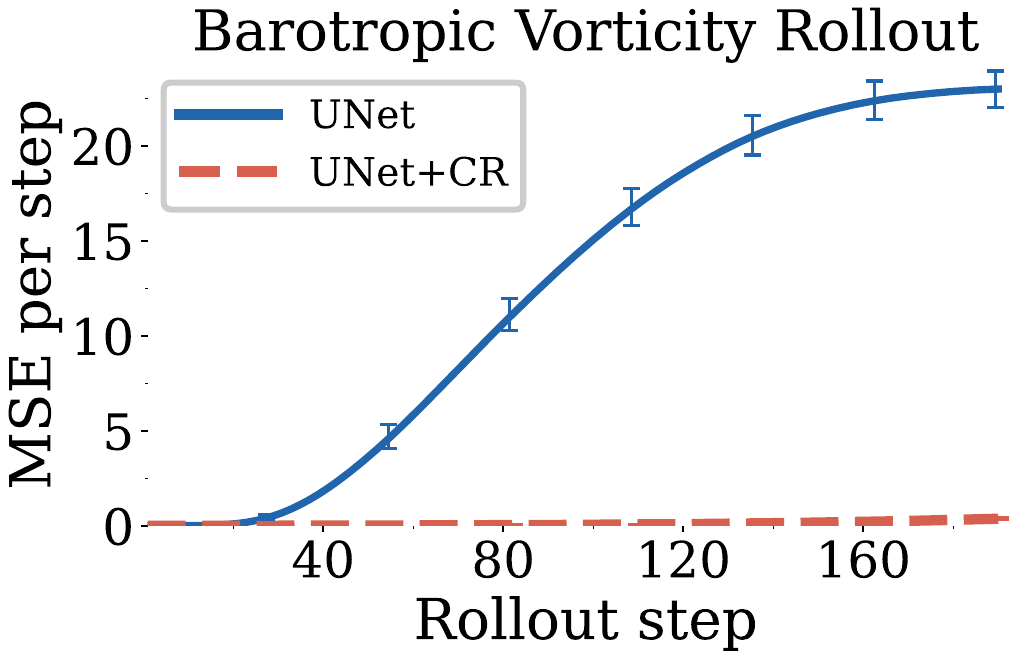}
  \caption{BVE rollout RMSE on the held-out test set.
           }
  \label{fig:bve_rollout_rmse}
\end{wrapfigure}

\paragraph{Setup.}
The backbone is a 2D UNet with circular padding,
and an $8{\times}8{\times}256$ bottleneck on which the regulariser acts.
Both regimes are trained with one-step MSE on $200$-frame
trajectories. 
Further details are deferred to Appendix~\ref{app:exp:bve}.

\paragraph{Result: the baseline destabilises inside the training window.}
Figure~\ref{fig:bve_rollout_rmse} and Table~\ref{tab:bve_full_horizons} report
the comparison. The two regimes are essentially tied at one step
but they quickly separate fast. 
By $t{=}1$\,s ($20$ steps) the baseline is already $5\times$ worse than the regularised model.
$\mathrm{RMSE}{=}1$, error variance equal to signal variance,
and from $t{\approx}3$\,s onward it saturates at
$\mathrm{RMSE}\!\approx\!3$--$6$, 
where the rollout has decorrelated from the ground truth.
The regularised model stays below $\mathrm{RMSE}{=}1$ throughout the
training horizon. 
At the end of the
$10$\,s window the regularised RMSE is $9\times$ smaller than the
baseline; in nMSE terms (the operationally relevant unit on this
normalised field) the gap is $82\times$.

The qualitative picture is in
the snapshot grid in Figure \ref{fig:rollout_bvort} and Appendix~\ref{app:exp:bve:snapshots}: the
baseline visibly amplifies vorticity to several times the natural
scale of the flow well before the end of the training horizon, while
the regularised rollout preserves coherent vortex structures and the
correct amplitude band throughout.

\paragraph{Summary.}
On a 2D chaotic PDE the unregularised UNet rollout becomes unstable
inside the training horizon. The same training-time regulariser that
controls long-horizon drift on KdV turns this instability into a
bounded, on-attractor rollout, with a $9\times$ RMSE improvement at
the end of the $10$\,s horizon and a $\sim$$80\times$ improvement in
nMSE. Together with the KdV results, this validates the effectiveness the regulariser across 1D and 2D and integrable vs.\ chaotic. 
 In both regimes, single-step MSE and rollout error rank the regimes in opposite orders, and a Jacobian-product-aware regulariser recovers the long-horizon
behaviour that the standard one-step training objective leaves
unconstrained. Per-step RMSE numbers, full data generation,
architecture and hyperparameters, and more visualizations are
in Appendix~\ref{app:exp:bve}.

\begin{figure}[t]%
  \centering
  \includegraphics[width=0.6\textwidth]{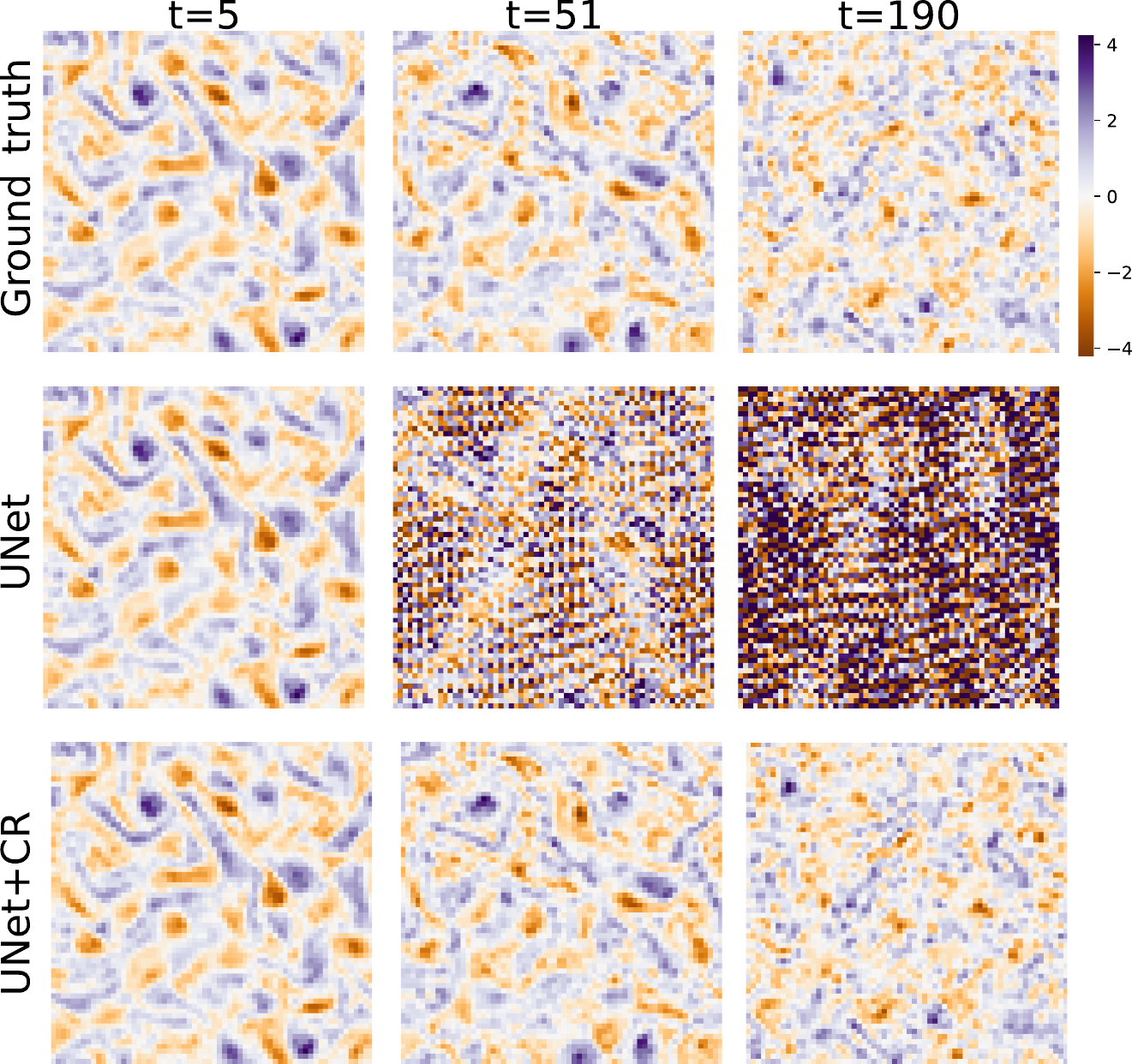}
  \caption{Barotropic vorticity unregularized (middle) and regularized (bottom) UNet rollout (Cf. Appendix \ref{app:exp:bve}).}
  \label{fig:rollout_bvort}
\end{figure}

\subsection{FourCastNet on ERA5}
\label{sec:exp:fcn}

We test commutativity regularisation on a large pretrained autoregressive
weather model: FourCastNet v1~\citep{pathak2022fourcastnet}, an AFNO
that maps the 20-channel ERA5 \citep{https://doi.org/10.1002/qj.3803, rasp2023weatherbench} state at $720{\times}1440$
($0.25^{\circ}$) to the state $6$\,h later. The question is whether
the regulariser improves long-horizon rollout when FCN is fine-tuned
on a small, in-distribution slice of its own training data, so that any gain has to come from the
structural effect of the regulariser on the learned Jacobians. 
The main finding is that plain supervised fine-tuning on this slice actively \emph{worsens} the long-horizon
near-surface temperature forecast, while fine-tuning with commutativity regularisation on the same data turns this regression
into a substantial improvement.
Results are shown in Figure \ref{fig:fcn_rmse_curves} and Tables \ref{tab:fcn_rmse_full_2018} and \ref{tab:fcn_rmse_full_2019}.

\paragraph{Setup.}
Starting from the pretrained FCNv1 \footnote{\url{https://github.com/NVlabs/FourCastNet}} backbone, we fine-tune three years (2013-2015) inside FCN's own 1979--2015 training range
 and compare against
(i) the frozen pretrained model and (ii) a plain supervised
fine-tune on 2013--2015 with the same optimiser and schedule but no regularization. 
The regularization is applied on the deep
AFNO blocks: the first $8$ blocks are run once and detached, and the
penalties are evaluated on the last two blocks. 
In this case, we apply the regularizer to layer normalized blocks.
We evaluate
40-step ($10$-day) autoregressive rollouts on ERA5~$2018$ at $41$ initial conditions, following \cite{pathak2022fourcastnet}, and report latitude-weighted RMSE in physical
units on the variables $z_{500}$ and $t_{2\mathrm{m}}$. 
A second
held-out evaluation on ERA5~$2019$ shows the same improvement for the three-year finetuning. 
Architecture, hyperparameters, visualizations and other details are
in Appendix~\ref{app:exp:fcn}.

\paragraph{Plain finetuning worsens
                  $t_{2\mathrm{m}}$, the regulariser recovers it.}
Table~\ref{tab:fcn_rmse_full_2018} reports the $t_{2\mathrm{m}}$ rollout
RMSE on ERA5~$2018$. Through day $5$ the three regimes are tied
within $\sim$$5\%$. From day $7$ onwards they split sharply: the
plain finetune \emph{regresses} relative to the frozen model
, while the regularised finetune on the same data improves on the frozen model.
The $2019$ replication (Appendix~\ref{app:exp:fcn:numbers}) shows
the same pattern. 
On the upper-air channels $z_{500}$, $t_{850}$, $u_{850}$ (all four
in Appendix~\ref{app:exp:fcn:numbers}) the regularised and frozen
models are within $\sim$$1$--$3\%$ at every lead, while the plain
finetune again drifts at the longest lead (e.g.\ $t_{850}$
day $10$: $4.26$\,K frozen, $6.33$\,K plain finetune, $4.14$\,K
regularised). The regulariser thus matches the frozen model on
the variables FCN already gets right and substantially improves
on the variable it most struggles with.

\begin{figure}[t]
  \vspace{-0.6em}
  \includegraphics[width=\linewidth]{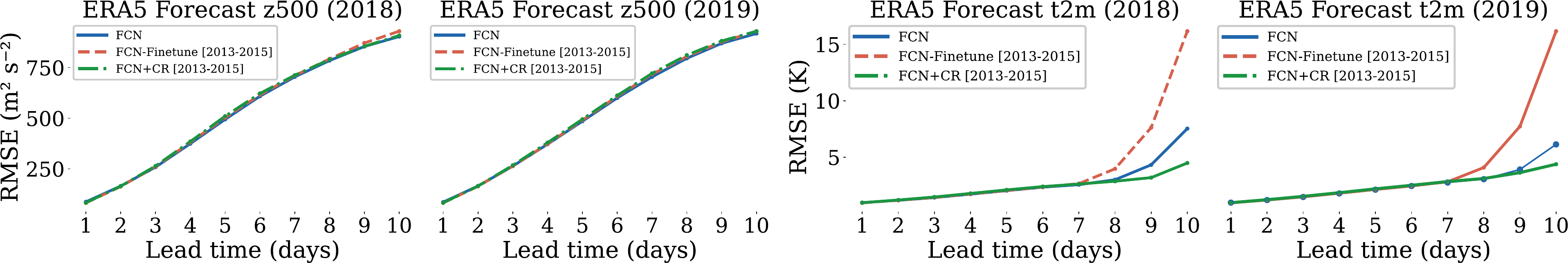}
  \caption{ERA5 rollout RMSE vs.\ lead time on $t_\mathrm{2m}$ and $z_{500}$ for held-out years 2018 and 2019 together with finetuning without regularization.
           Finetuning-only destroys long-horizon $t_\mathrm{2m}$;
           the same data with commutativity regularisation pulls
           well below the frozen FCN baseline.}
  \label{fig:fcn_rmse_curves}
  \vspace{-0.6em}
\end{figure}

\paragraph{Summary.}
On a $74$\,M-parameter pretrained weather model, fine-tuning on a
small in-distribution slice of its own training years with the same
training-time regulariser used for the synthetic PDEs flips a
catastrophic $3{\times}$ degradation of $t_{2\mathrm{m}}$ at day~$10$
into a $41\%$ improvement (and a comparable improvement on the
held-out 2019 year), without exposure to new data and without
inference-time cost. Together with the KdV and BVE results, this
brings the validation from $1$D integrable and
$2$D chaotic synthetic settings up to a high-resolution real-world
weather model, on the variable for which the iterated rollout most
visibly fails. 
The full per-channel, per-lead, multi-year tables,
the regulariser implementation details and the architectural
hyperparameters are in Appendix~\ref{app:exp:fcn}.

\subsection{Sea Surface Temperature Forecast}
\label{sec:exp:sst}
We train and compared a regularized model from scratch on a real, observation-derived field:
weekly NOAA OISST sea-surface temperature on a global $1^\circ$
grid~\citep{reynolds2002oisst}. A 2D UNet
($k_{\text{in}}{=}1$, depth $4$, base width $64$) is trained with
one-step MSE on the $1200$-week training split, with and without the
latent commutativity / normality penalties of
Section~\ref{sec:method} on its bottleneck. 
We initialise from the
first frame of the $377$-week test split and roll autoregressively
for $376$ weeks ($\approx 7.2$\,years). Setup, architecture and
hyperparameters are in Appendix~\ref{app:exp:sst}.

The two models are tied at one-week lead, 
i.e.\ the regulariser does not hurt short-lead accuracy. They diverge
from $\sim 3$ months onward: the baseline drifts in phase relative
to the annual cycle and amplifies anomalies, while the regularised
rollout tracks the seasonal cycle out to the end of the $\sim 7$-year
window. Averaged over the full $376$-step rollout the regularised
model is $3.0\times$ more accurate. Together with the BVE result,
this shows that the same regulariser controls two qualitatively
different long-horizon failure modes, chaotic energy saturation
on BVE, and slow phase drift on a forced near-periodic real-world
field. 

\begin{figure}[t]%
\centering
  \includegraphics[width=0.4\textwidth]{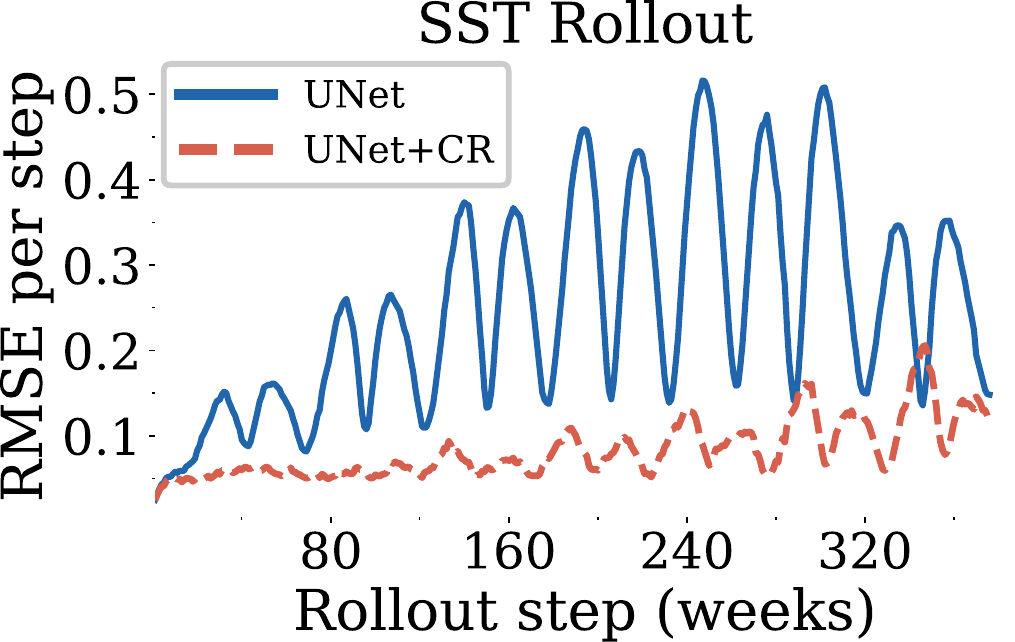}
  \caption{SST rollout RMSE (normalised units) versus lead time (Cf. Appendix \ref{app:exp:sst}).}
  \label{fig:sst:rollout-rmse}
\end{figure}

\section{Conclusion}
\label{sec:conclusion}
We identified \emph{transient amplification} of perturbations along rollout trajectories, driven by non-normal and non-commuting latent Jacobians, as a structural mechanism for long-horizon error growth in autoregressive neural simulators and responsible from distributional shift.
On the basis of this analysis we introduced \emph{commutativity regularisation}: a pair of JVP-based penalties on per-step Jacobian normality and across-step commutativity, supported by a propagator bound that replaces the spectral-norm $\alpha^T$ rate by the spectral-radius $\rho^T$ rate as the two penalties go to zero.
Across KdV, the barotropic vorticity equation, sea surface temperature data, and fine-tuning of the pretrained FourCastNet weather model on ERA5, the method consistently improves long-horizon accuracy at zero inference-time cost, and pushes regularised models well past the rollout horizons at which baselines diverge.

\paragraph{Limitations and outlook.}
The propagator bound is linearisation-based and assumes that the local Jacobians are representative of the rollout error map; far from the training distribution, where the linearisation itself is suspect, our guarantees become only suggestive.
The penalties are estimated stochastically with single-probe JVPs, and per-step normality together with adjacent-step commutativity is a tractable surrogate for the all-pairs joint diagonalisability condition that the theory asks for.
The method also targets the \emph{structural} component of error growth and is therefore complementary to, rather than a replacement for, refinement and diffusion methods that address distributional shift; we expect the strongest results from combining the two.
Finally, the regulariser is most natural for architectures with an explicit latent bottleneck through which temporal dynamics must pass; choosing the regularised subspace for purely spectral or attention-based simulators, and scaling the method to global weather and climate models at higher resolution, are the directions we view as most pressing.

\newpage
\section*{Acknowledgements}

\begin{wrapfigure}[3]{r}{0.2\linewidth}
  \vspace{-0.2in}
  \hspace{0.15in}
    \includegraphics[width=0.65\linewidth]{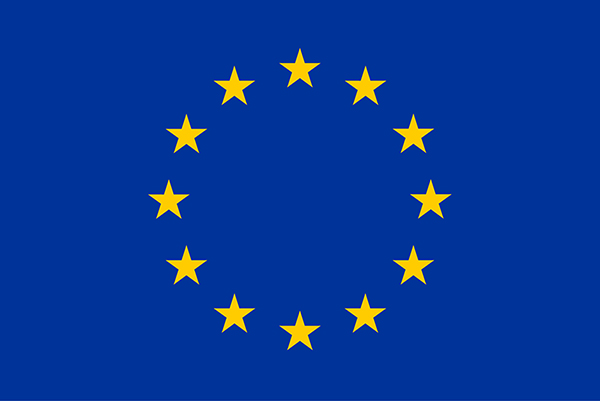}
\end{wrapfigure}

This project has received funding from the European Union’s Horizon 
2020 research and innovation programme under the Marie Skłodowska-Curie Grant Agreement No. 101034413.
\\\\

\newpage

\bibliography{references,references_extra}
\newpage
\appendix

\section{Proof of theorem \ref{thm:propagator}}
\label{app:method:prop-proof}

We first record the exact result when the Jacobians commute and are individually normal.

\begin{proposition}[Exact commuting case]
  \label{prop:exact}
  Let $J_0,\ldots,J_{T-1} \in \mathbb{R}^{n\times n}$ be simultaneously diagonalisable
  as $J_t = U \Lambda_t U^\top$ for a common orthogonal matrix $U$ and diagonal
  $\Lambda_t = \operatorname{diag}(\lambda_{1,t},\ldots,\lambda_{n,t})$.
  If $\rho = \max_{i,t}|\lambda_{i,t}|$, then
  \begin{equation}
    \|\Phi_T\|_2 \;\le\; \rho^T.
    \label{eq:exact-commuting}
  \end{equation}
\end{proposition}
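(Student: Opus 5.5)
The plan is a direct computation that exploits orthogonality of $U$. First, I write the propagator in the common eigenbasis. Since $U^\top U = I$, the interior factors $U^\top U$ in the product telescope:
\begin{equation*}
  \Phi_T = J_{T-1}\cdots J_0 = U\Lambda_{T-1}U^\top U\Lambda_{T-2}U^\top\cdots U\Lambda_0 U^\top = U\,D\,U^\top,
  \qquad D := \Lambda_{T-1}\cdots\Lambda_0 .
\end{equation*}
This is the identity already displayed in \eqref{eq:commuting-product}, and here it follows only from associativity and $U^\top U = I$. Products of diagonal matrices are diagonal, so $D = \operatorname{diag}(d_1,\ldots,d_n)$ with $d_i = \prod_{t=0}^{T-1}\lambda_{i,t}$.

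Second, I use unitary invariance of the spectral norm. Since $U$ is orthogonal, $\|U D U^\top\|_2 = \|D\|_2$. The spectral norm of a diagonal matrix is the largest modulus of its diagonal entries. Hence
\begin{equation*}
  \|\Phi_T\|_2 = \max_i |d_i| = \max_i \prod_{t=0}^{T-1}|\lambda_{i,t}| \le \prod_{t=0}^{T-1}\rho = \rho^T,
\end{equation*}
because each factor satisfies $|\lambda_{i,t}|\le\rho$ by definition of $\rho$. This gives \eqref{eq:exact-commuting}, and in fact $\|\Phi_T\|_2$ equals the left-hand maximum exactly.

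There is no real obstacle here. The only point needing care is the field. If some $J_t$ is normal but not symmetric, its eigenvalues may be complex, and a real orthogonal $U$ with real diagonal $\Lambda_t$ cannot represent it. I will simply take the hypothesis as stated, with real diagonal $\Lambda_t$. Alternatively, I can note that the identical argument goes through verbatim with $U$ unitary, $U^\top$ replaced by $U^*$, and $\lambda_{i,t}\in\mathbb{C}$, since unitary invariance of $\|\cdot\|_2$ and the diagonal-norm formula hold over $\mathbb{C}$. I would add this as a one-line remark so that the proposition covers commuting normal (not just symmetric) Jacobians. That is the form needed later when perturbing toward the approximate case of Theorem~\ref{thm:propagator}.
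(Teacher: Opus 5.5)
Your proof is correct and follows the same route as the paper's: telescope the product to $U\Lambda_{T-1}\cdots\Lambda_0U^\top$, use orthogonal invariance of $\|\cdot\|_2$, and bound each diagonal entry $\prod_t|\lambda_{i,t}|$ by $\rho^T$. Your remark about passing to unitary $U$ and complex $\lambda_{i,t}$ is a worthwhile addition. The proof of Theorem~\ref{thm:propagator} applies this proposition to commuting normal matrices, and real normal matrices need not be orthogonally diagonalizable with real diagonal factors.
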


\begin{proof}
  $\Phi_T = U\,\Lambda_{T-1}\cdots\Lambda_0\,U^\top$.
  Since $U$ is orthogonal, $\|\Phi_T\|_2 = \|\Lambda_{T-1}\cdots\Lambda_0\|_2$.
  The product of diagonal matrices is diagonal with $(i,i)$ entry
  $\prod_{t=0}^{T-1}\lambda_{i,t}$, so
  $\|\Lambda_{T-1}\cdots\Lambda_0\|_2 = \max_i |\prod_t \lambda_{i,t}|
  \le \rho^T$.
\end{proof}

\begin{remark}
The improvement over the crude submultiplicativity bound $\alpha^T$
  (where $\alpha = \max_t\|J_t\|_2 \ge \rho(J_t) \ge \rho$) is real whenever the
  Jacobians are \emph{non-normal}: a non-normal matrix satisfies $\rho(J) < \|J\|_2$,
  so $\rho < \alpha$ and $\rho^T \ll \alpha^T$ for large $T$.
  Proposition~\ref{prop:exact} therefore shows that commutativity (plus normality) lets
  the propagator grow at the \emph{spectral-radius} rate rather than the
  \emph{spectral-norm} rate.
\end{remark}

\proptheorem*
\begin{proof}%
When $\varepsilon = \eta = 0$, each $J_t$ is normal and all pairs commute.
Commuting normal matrices are simultaneously diagonalizable by a common
orthogonal matrix~\citep{horn1985}, giving $\|\Phi_T\|_2 \le \rho^T$
by Proposition~\ref{prop:exact}.

For $\varepsilon, \eta > 0$, the joint conditions (iii)--(iv) imply that $\{J_t\}$
lies within distance $\delta(\varepsilon,\eta)$ of the closed set of simultaneously
orthogonally diagonalizable families, i.e.\ there exist
$J'_t = U\Lambda_t U^\top$ (orthogonal $U$, diagonal $\Lambda_t$ with
$|\lambda_{i,t}| \le \rho$) satisfying $\|J_t - J'_t\|_2 \le \delta(\varepsilon,\eta)$.

Write $J_t = J'_t + E_t$ with $\|E_t\|_2 \le \delta \equiv \delta(\varepsilon,\eta)$.
Expanding the product to first order in $\{E_t\}$:
\[
  \Phi_T
  \;=\; \prod_{t=0}^{T-1}(J'_t + E_t)
  \;=\; \underbrace{\prod_{t=0}^{T-1} J'_t}_{=:\,\Phi'_T}
    \;+\; \sum_{k=0}^{T-1}
      \underbrace{J'_{T-1}\cdots J'_{k+1}}_{L_k} E_k
      \underbrace{J_{k-1}\cdots J_0}_{R_k}
    \;+\; O(\delta^2).
\]
Since $J'_t = U\Lambda_t U^\top$, we have $\|\Phi'_T\|_2 = \|\Lambda_{T-1}\cdots\Lambda_0\|_2
\le (\rho + \delta)^T \le \rho^T + T\rho^{T-1}\delta \le \rho^T + T\alpha^{T-1}\delta$.
Each cross-term satisfies $\|L_k E_k R_k\|_2 \le \alpha^{T-1}\delta$.
Summing the $T$ cross-terms and absorbing the $O(\delta^2)$ remainder into the leading
terms for small $\delta$:
\[
  \|\Phi_T\|_2
  \;\le\; \rho^T + T\alpha^{T-1}\delta + T\alpha^{T-1}\delta
  \;=\; \rho^T + 2T\alpha^{T-1}\delta(\varepsilon,\eta). \qedhere
\]
\end{proof}

\begin{remark}[Why both conditions are needed]
  Condition~(iii) alone (commutativity, without normality) yields commuting
  approximants $J'_t$ that share an eigenbasis, but that basis need not be
  orthogonal, so the $\|\Phi'_t\|_2$ bound would require a conditioning factor for the eigenbasis.
  Condition~(iv) alone (normality, without commutativity) ensures each $J_t$
  individually has $\|J_t^k\|_2 = \rho(J_t)^k$, but products of distinct
  normal matrices need not satisfy $\|\Phi_T\|_2 \le \rho^T$.
  The two conditions are complementary: together they drive the sequence
  toward a simultaneously orthogonally diagonalizable family.
\end{remark}

\begin{remark}[Spectral norm alone is insufficient]
  Condition~(i) without~(iii)--(iv) does not prevent transient amplification.
  There exist sequences $\{J_t\}$ with $\|J_t\|_2 = \alpha < 1$ yet
  $\|\Phi_T\|_2 \gg \alpha^T$ for $T$ in the transient regime, due to
  non-commuting, non-normal interactions.
\end{remark}

\section{Regularizer configuration}
\label{app:reg-config}

For very large models where the regularizing the latent advance in infeasible (such as FourCastNet), we instead regularize a chosen block in the network.
A schematic of this is show in Figure \ref{fig:reg-arch-block}.

\begin{figure}[h!]
  \centering
  \includegraphics[width=0.8\textwidth]{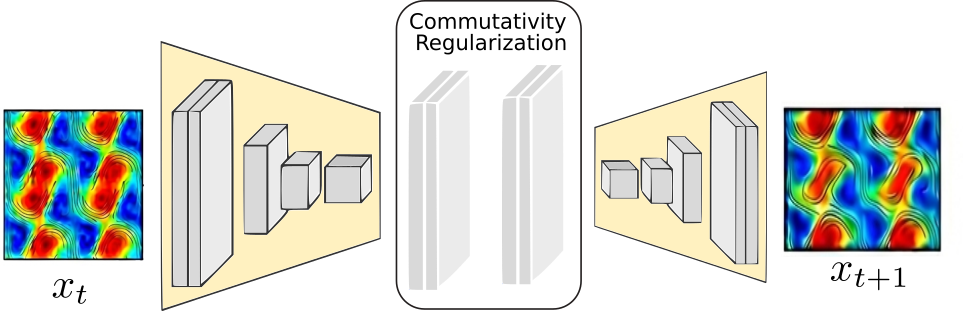}
  \caption{Latent block commutativity regularization. Use only for the FourCastNet experiment in this paper.}
  \label{fig:reg-arch-block}
\end{figure}

\section{Reducing computational overhead}
\label{app:overhead}
Three practical approximations reduce cost with negligible quality loss.
(a) \emph{Temporal subsampling}: the regularization terms are evaluated only every
$K \in \{1,5,10\}$ minibatches depending on the experiment, since Jacobian structure changes slowly relative to
individual gradient steps.
(b) Jacobian vector products are chosen on a smaller sample (about half) of each minibatch, and
(c) \emph{Single shared probe}: one probe vector $\mathbf{v}$ is drawn per minibatch
and reused for all examples, rather than drawing per-example vectors.
With these approximations the additional training wall-clock cost is
approximately $20$--$30\%$ over the baseline.
The inference cost is \emph{exactly zero}: the regularizer affects only the
learned weights $\theta$, not the rollout procedure.

\section{KdV experiment: details}
\label{app:exp:kdv}

This appendix expands the KdV experiment of
Section~\ref{sec:exp:kdv}: data and OOD construction, the four
backbone architectures, training and regulariser hyperparameters, and the full per-horizon nMSE numbers
underlying Figure~\ref{fig:kdv_rollout_nmse} and
Tables~\ref{tab:kdv_app_unet_id}, \ref{tab:kdv_app_unet_ood} and \ref{tab:kdv_app_fno_full}.

\subsection{Data generation and OOD construction}
\label{app:exp:kdv:data}

\paragraph{Equation and solver.}
We solve the 1D Korteweg--de Vries equation
\begin{equation}
  u_t + u\,u_x + u_{xxx} \;=\; 0,
  \qquad x \in [-20, 20],
  \qquad u(\cdot,t)\ \text{periodic},
  \label{eq:kdv}
\end{equation}
on a uniform grid of $N=256$ points. Spatial derivatives are taken
pseudospectrally,
$u_x=\mathcal F^{-1}[ik\hat u]$ and
$u_{xxx}=\mathcal F^{-1}[(ik)^3\hat u]$, and the resulting ODE system is
integrated with SciPy's adaptive RK45 (\texttt{solve\_ivp},
$\texttt{rtol}=10^{-6}$). Snapshots are stored on a uniform time grid
of $\Delta t = 0.05$\,s.

\paragraph{Single-soliton training set (in-distribution).}
Initial conditions are random single solitons,
\begin{equation}
  u_0(x) \;=\; A\,\mathrm{sech}^{2}\!\bigl((x-x_0)/w\bigr),
  \qquad
  A \sim \mathcal U[0.5, 2.0],\quad
  w \sim \mathcal U[0.5, 2.0],\quad
  x_0 \sim \mathcal U[-15, 15].
\end{equation}
We generate $256$ training and $120$ validation trajectories of $200$
steps each ($10$\,s of physical time) for fitting all four backbones,
and an additional $50$ trajectories of $5000$ steps for the long-horizon
in-distribution evaluation.

\paragraph{Multi-soliton OOD set.}
The OOD test set is generated by the same solver but with initial
conditions consisting of a uniformly random number of solitons in
$\{1,2,3\}$, each with independently sampled amplitude, width and
centre from the training distribution. The $50$ OOD trajectories are
$5000$ steps long. Models trained only on single solitons are thus
evaluated on collisions and overtakings they have never seen during
training.

\subsection{Architectures}
\label{app:exp:kdv:arch}

All four backbones consume a single channel
($u(x,t_k)$) and produce a single channel ($u(x,t_{k+1})$) on the same
$256$-point periodic grid.

\paragraph{1D UNet.}
A circular-padded 1D UNet,
$\sim$$1.39$\,M parameters): base channel width $32$, channel
multipliers $(1,2,4,8)$, depth $4$, $3{\times}1$ kernels, GELU
activations, no normalisation, residual two-conv blocks at every
encoder/decoder level, strided $2{\times}1$ convolutions for
down-sampling and transposed convolutions for up-sampling. Skip
connections at every encoder level. The bottleneck has shape
$(256, 16) = 4096$ dimensions and is the latent state on which the
regulariser acts; the latent advance operator is
$F(z, \mathrm{skips}) =
\mathrm{encode}\bigl(\mathrm{decode}(z, \mathrm{skips})\bigr)$, so the
regularised Jacobian naturally couples the bottleneck channels with
the skip-feature channels.

\paragraph{Fourier Neural Operator (FNO).}
A four-block FNO, with $64$ retained Fourier modes per layer,
channel width $128$, GELU activations, and a final two-layer pointwise
projection. The first two blocks define the encoder
($z=\mathrm{encode}(u)$ of shape $(64,256)=16384$ dims, no skip
connections); the last two define the decoder. The latent advance
operator differentiated by the regulariser is
$F(z) = \mathrm{encode}(\mathrm{decode}(z))$.

\paragraph{U-FNO.}
The U-FNO backbone replaces the last
two FNO blocks with U-FNO blocks that sum a spectral convolution, a
small two-level $1$D UNet on the same feature map, and a $1{\times}1$
pointwise convolution before activation. Channel width $128$, $64$
retained Fourier modes, GELU activations. The regulariser acts on the
encoder/decoder split in the same way as for the plain FNO.

\paragraph{PDE-Refiner.}
We use the PDE-Refiner formulation~\citep{lippe2023pderefiner}
with a single denoising network conditioned on the previous state, the
current (noisy) prediction, and a step index $k\in\{0,\ldots,M\}$. 
The backbone is identical to
the UNet above; we use $M=4$ refinement iterations per rollout step
and the geometric noise schedule of \cite{lippe2023pderefiner} with
$\sigma_{\min}=10^{-7}$. PDE-Refiner therefore costs $M{+}1=5\times$
backbone evaluations per rollout step at inference; commutativity
regularisation does not modify the rollout procedure and adds no
inference-time cost.

\subsection{Training and regulariser hyperparameters}
\label{app:exp:kdv:hp}

\begin{table}[h]
  \centering
  \caption{KdV training hyperparameters, identical between baseline and
           regularised regimes except for the loss term. PDE-Refiner
           uses the same optimiser settings as the UNet and adds the
           denoising loss schedule of~\cite{lippe2023pderefiner}.}
  \label{tab:kdv_hp}
  \small
  \begin{tabular}{ll}
    \toprule
    Optimiser            & AdamW \\
    Peak learning rate   & $3 \times 10^{-4}$ \\
    Weight decay         & $10^{-5}$ \\
    Schedule             & cosine annealing to $10^{-7}$ \\
    Epochs               & $500$ (UNet, FNO, U-FNO) / $500$ (PDE-Refiner) \\
    Batch size           & $256$ \\
    Training trajectory length & $200$ steps ($10$\,s) \\
    Loss (one-step)      & $\|F_\theta(u_t) - u_{t+1}\|^2$ (MSE) \\
    \midrule
    Regulariser          & latent-space comm.\ + normality \\
    $\lambda_\mathrm{c}$ & $10^{-4}$ \\
    $\lambda_\mathrm{n}$ & $10^{-4}$ \\
    JVP frequency        & every $10$th minibatch \\
    Probe vector $v$     & i.i.d.\ Gaussian, fresh per evaluation \\
    Adjacent pair        & $(z_t, z_{t+1}{=}G_\theta(z_t))$ \\
    \bottomrule
  \end{tabular}
\end{table}

\paragraph{Regulariser construction (same on every backbone).}
At a regulariser-active minibatch we draw a fresh sample $u_t$,
encode it to obtain $z_t$, then decode-then-encode the pair
$(z_t,\mathrm{skips}_t)$ to obtain the model's own one-step image
$z_{t+1}=G_\theta(z_t)$ (and the corresponding skip activations for
the UNet). The commutator and normality penalties are computed as
\[
  \mathcal{L}_{\mathrm{comm}}
   \;=\; \|[J_t, J_{t+1}]\,v\|^2,
  \qquad
  \mathcal{L}_{\mathrm{norm}}
   \;=\; \|(J_t^\top J_t - J_t J_t^\top)\,v\|^2,
\]
on the latent state space, with $J_\bullet=\partial F/\partial z$
evaluated at the appropriate latent state, each Jacobian--vector
product computed by a single \texttt{torch.func.jvp} call (and one
\texttt{vjp} for the normality term). Subsampling the regulariser to
every tenth minibatch keeps the training-time overhead within
$\sim$$10$--$20\%$ of the baseline per epoch on the same hardware.

\subsection{Full per-horizon results: UNet variants}
\label{app:exp:kdv:numbers_unet}

\begin{table}[h]
  \centering
  \caption{KdV final-step nMSE for the UNet variants on the
           single-soliton in-distribution test set
           (50 trajectories, mean over $3$ seeds).
           Training horizon is $200$ steps. Lower is better.}
  \label{tab:kdv_app_unet_id}
  \scriptsize
  \setlength{\tabcolsep}{3pt}
  \begin{tabular}{lcccccccc}
    \toprule
    Step & $50$ & $100$ & $200$ & $500$ & $1000$ & $2000$ & $3000$ & $5000$ \\
    \midrule
    UNet
      & $2.9\!\times\!10^{-4}$
      & $2.1\!\times\!10^{-3}$
      & $2.1\!\times\!10^{-2}$
      & $2.6\!\times\!10^{-1}$
      & $3.8\!\times\!10^{-1}$
      & $4.7\!\times\!10^{-1}$
      & $6.7\!\times\!10^{-1}$
      & $1.9$ \\
    PDE-Refiner
      & $2.2\!\times\!10^{-4}$
      & $6.1\!\times\!10^{-4}$
      & $1.5\!\times\!10^{-3}$
      & $4.2\!\times\!10^{-3}$
      & $1.2\!\times\!10^{-2}$
      & $1.1\!\times\!10^{-1}$
      & $2.1\!\times\!10^{-1}$
      & $3.7\!\times\!10^{-1}$ \\
    UNet + comm.\ reg.
      & $1.3\!\times\!10^{-2}$
      & $1.8\!\times\!10^{-2}$
      & $1.9\!\times\!10^{-2}$
      & $2.0\!\times\!10^{-2}$
      & $2.3\!\times\!10^{-2}$
      & $3.5\!\times\!10^{-2}$
      & $5.1\!\times\!10^{-2}$
      & $6.9\!\times\!10^{-2}$ \\
    \bottomrule
  \end{tabular}
\end{table}

\begin{table}[h]
  \centering
  \caption{KdV final-step nMSE for the UNet variants on the
           multi-soliton out-of-distribution test set
           ($50$ trajectories with up to three solitons,
           mean over $3$ seeds). Training horizon is $200$ steps.
           Lower is better.}
  \label{tab:kdv_app_unet_ood}
  \scriptsize
  \setlength{\tabcolsep}{3pt}
  \begin{tabular}{lcccccccc}
    \toprule
    Step & $50$ & $100$ & $200$ & $500$ & $1000$ & $2000$ & $3000$ & $5000$ \\
    \midrule
    UNet
      & $2.0\!\times\!10^{-4}$
      & $2.2\!\times\!10^{-3}$
      & $2.1\!\times\!10^{-2}$
      & $2.6\!\times\!10^{-1}$
      & $4.0\!\times\!10^{-1}$
      & $4.9\!\times\!10^{-1}$
      & $7.3\!\times\!10^{-1}$
      & $1.8$ \\
    PDE-Refiner
      & $3.8\!\times\!10^{-3}$
      & $5.8\!\times\!10^{-3}$
      & $1.1\!\times\!10^{-2}$
      & $3.4\!\times\!10^{-2}$
      & $6.6\!\times\!10^{-2}$
      & $1.1\!\times\!10^{-1}$
      & $2.0\!\times\!10^{-1}$
      & $3.8\!\times\!10^{-1}$ \\
    UNet + comm.\ reg.
      & $1.6\!\times\!10^{-2}$
      & $2.1\!\times\!10^{-2}$
      & $2.1\!\times\!10^{-2}$
      & $2.3\!\times\!10^{-2}$
      & $2.5\!\times\!10^{-2}$
      & $3.6\!\times\!10^{-2}$
      & $4.7\!\times\!10^{-2}$
      & $6.1\!\times\!10^{-2}$ \\
    \bottomrule
  \end{tabular}
\end{table}

The UNet baseline is the more accurate model at the very first
rollout steps ($\le 50$) but is overtaken by both alternatives by step
$\sim$$100$ and is more than an order of magnitude worse than UNet+CR
already inside the training window (step $200$).
PDE-Refiner is the strongest \emph{unregularised} model up to step
$\sim$$1000$, paying $5\times$ inference-time cost; from step
$\sim$$2000$ onwards UNet+CR overtakes it on both the in-distribution
and the out-of-distribution split. The relative gain is largest in the
$25\times$-extrapolation regime, where the unregularised UNet rollout
has lost coherence and PDE-Refiner has accumulated enough error to
also degrade.

\subsection{Full per-horizon results: spectral backbones}
\label{app:exp:kdv:numbers_fno}

\begin{table}[h]
  \centering
  \caption{KdV final-step nMSE for the spectral backbones on the
           in-distribution single-soliton test set
           ($120$ trajectories, mean over three seeds. Training horizon is $200$ steps; the rightmost
           four columns are extrapolation in rollout length.
           Lower is better.}
  \label{tab:kdv_app_fno_full}
  \scriptsize
  \setlength{\tabcolsep}{3pt}
  \begin{tabular}{lccccccc}
    \toprule
    Step & $50$ & $100$ & $200$ & $500$ & $1000$ & $1500$ & $2000$ \\
    \midrule
    FNO
      & $6.3\!\times\!10^{-4}$
      & $7.6\!\times\!10^{-2}$
      & $3.1\!\times\!10^{-1}$
      & $3.9\!\times\!10^{-1}$
      & $4.9\!\times\!10^{-1}$
      & $5.0\!\times\!10^{-1}$
      & $5.0\!\times\!10^{-1}$ \\
    U-FNO
      & $2.1\!\times\!10^{-4}$
      & $6.1\!\times\!10^{-4}$
      & $1.5\!\times\!10^{-3}$
      & $7.2\!\times\!10^{-3}$
      & $1.7\!\times\!10^{-1}$
      & $1.6\!\times\!10^{-1}$
      & $1.7\!\times\!10^{-1}$ \\
    FNO + comm.\ reg.
      & $2.1\!\times\!10^{-4}$
      & $6.1\!\times\!10^{-4}$
      & $1.5\!\times\!10^{-3}$
      & $7.3\!\times\!10^{-3}$
      & $1.9\!\times\!10^{-2}$
      & $3.4\!\times\!10^{-2}$
      & $5.1\!\times\!10^{-2}$ \\
    U-FNO + comm.\ reg.
      & $2.1\!\times\!10^{-4}$
      & $6.2\!\times\!10^{-4}$
      & $1.5\!\times\!10^{-3}$
      & $4.4\!\times\!10^{-3}$
      & $1.3\!\times\!10^{-2}$
      & $2.9\!\times\!10^{-2}$
      & $4.6\!\times\!10^{-2}$ \\
    \bottomrule
  \end{tabular}
\end{table}

\paragraph{Why the unregularised FNO collapses.}
At the end of training the baseline FNO has a one-step validation MSE
of $\sim$$1.5\!\times\!10^{-7}$ and the regularised FNO
$\sim$$6.9\!\times\!10^{-7}$, i.e.\ the baseline is $\sim$$4\times$
\emph{better} per step. Yet the baseline's autoregressive rollout
amplifies error super-exponentially and reaches nMSE $\sim$$0.3$ by
step $200$, while FNO+CR maintains nMSE $\sim$$1.5\!\times\!10^{-3}$
at the same horizon. The U-FNO baseline is more stable than the plain
FNO baseline because the embedded multi-scale UNet path injects local
mixing that breaks the otherwise diagonal Fourier-block structure, but
it still collapses around step $\sim$$1000$. In both cases the
regulariser converts a divergent rollout into a bounded one.

\subsection{In-distribution Per-trajectory snapshots}
\label{app:exp:kdv:snapshots}
See Figure \ref{fig:kdv_snapshots_rewrite}.

\begin{figure}[h]
  \centering
  \includegraphics[width=\linewidth]{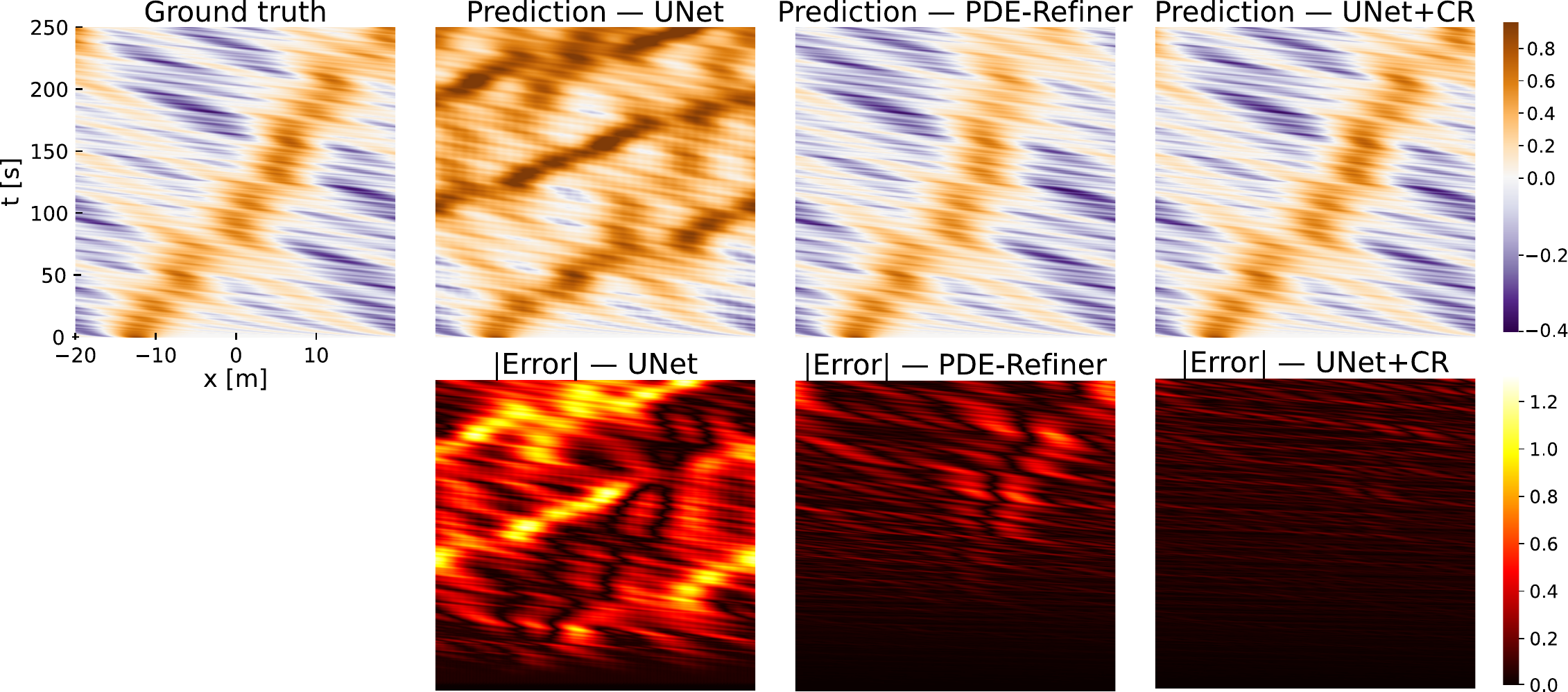}
  \caption{KdV space--time plots of $u(x,t)$ for representative in-distribution test trajectories: ground truth, baseline rollout, and
           commutativity-regularised rollout, run for the full $5000$
           steps (UNet variants) 
           from a single initial condition. Color scale is symmetric
           and shared per trajectory.  }
  \label{fig:kdv_snapshots_rewrite}
\end{figure}

\subsection{Out-of-distribution Per-trajectory snapshots}
\label{app:exp:kdv:snapshots_ood}
See Figure \ref{fig:kdv_snapshots_ood}.

\begin{figure}[h]
  \centering
  \includegraphics[width=\linewidth]{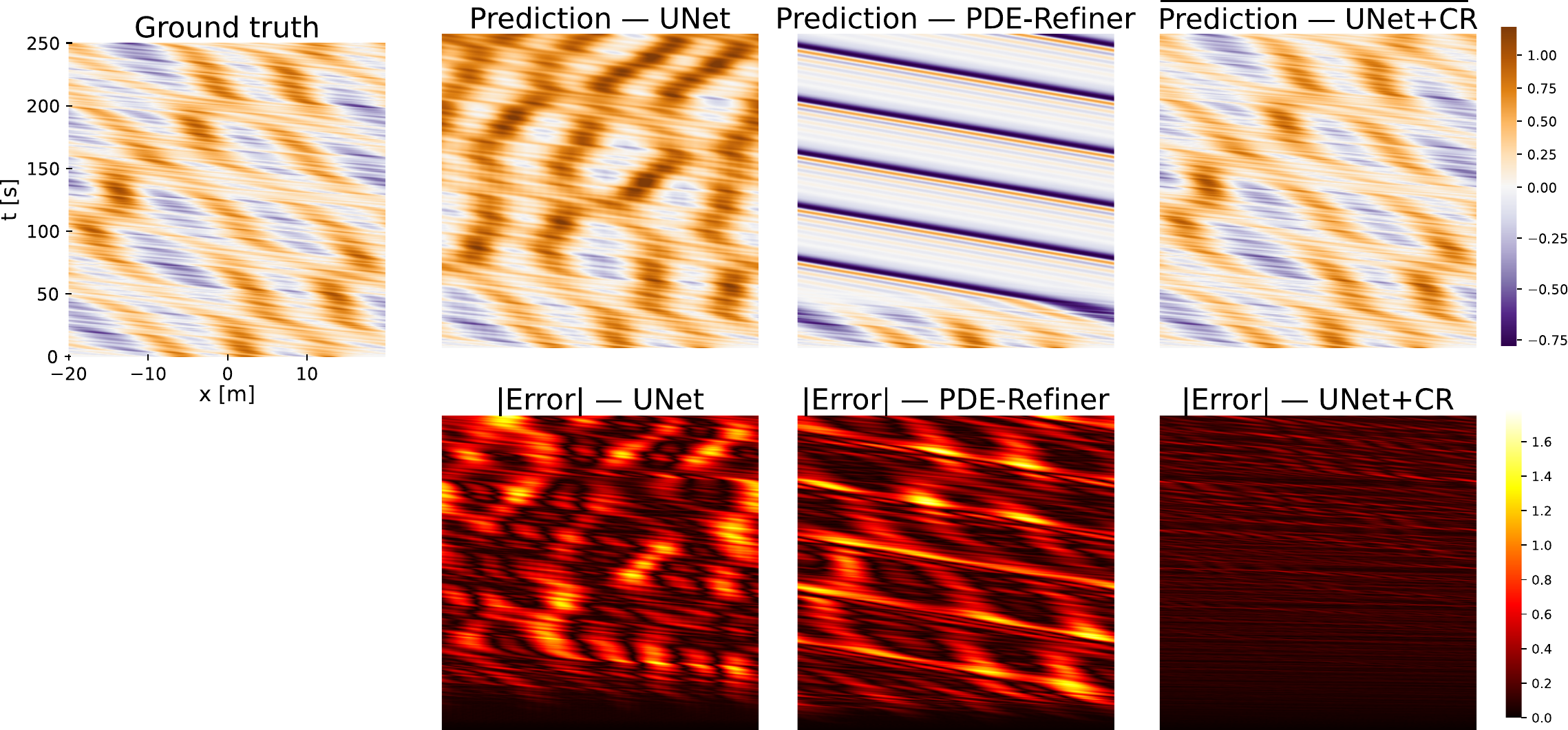}
  \includegraphics[width=\linewidth]{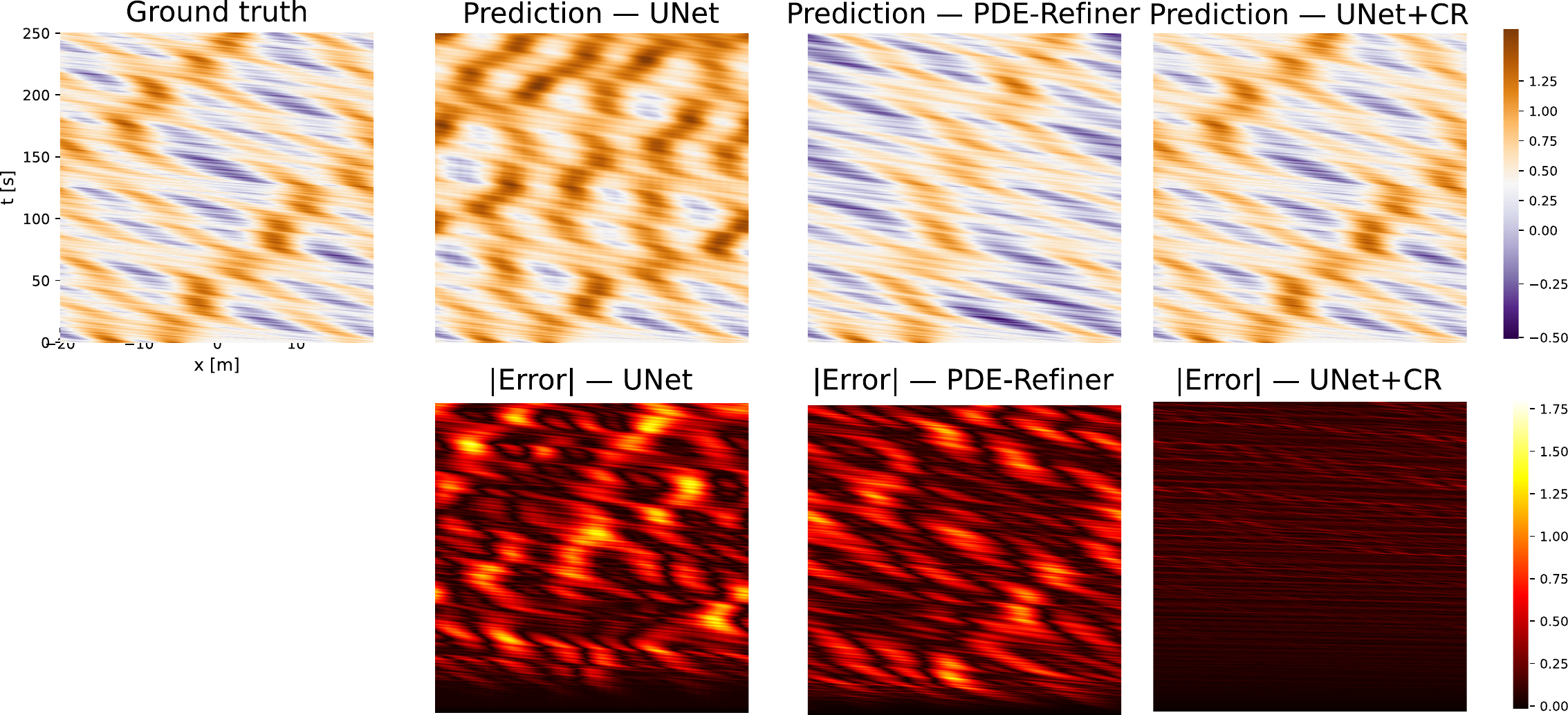}
  \caption{KdV space--time plots of $u(x,t)$ for representative OOD test
           trajectories: ground truth, baseline rollout, and
           commutativity-regularised rollout, run for the full $5000$
           steps (UNet variants).}
  \label{fig:kdv_snapshots_ood}
\end{figure}

\begin{figure}[h]
  \centering
  \includegraphics[width=\linewidth]{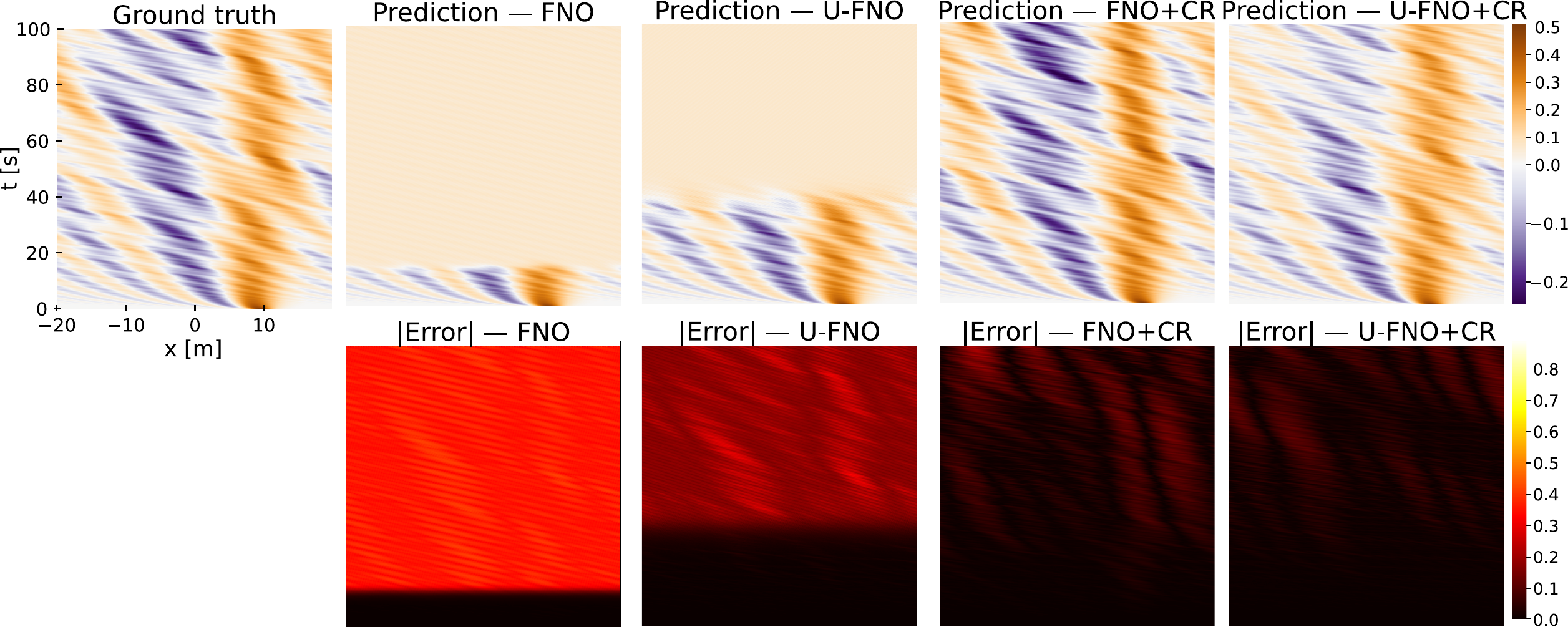}
  \includegraphics[width=\linewidth]{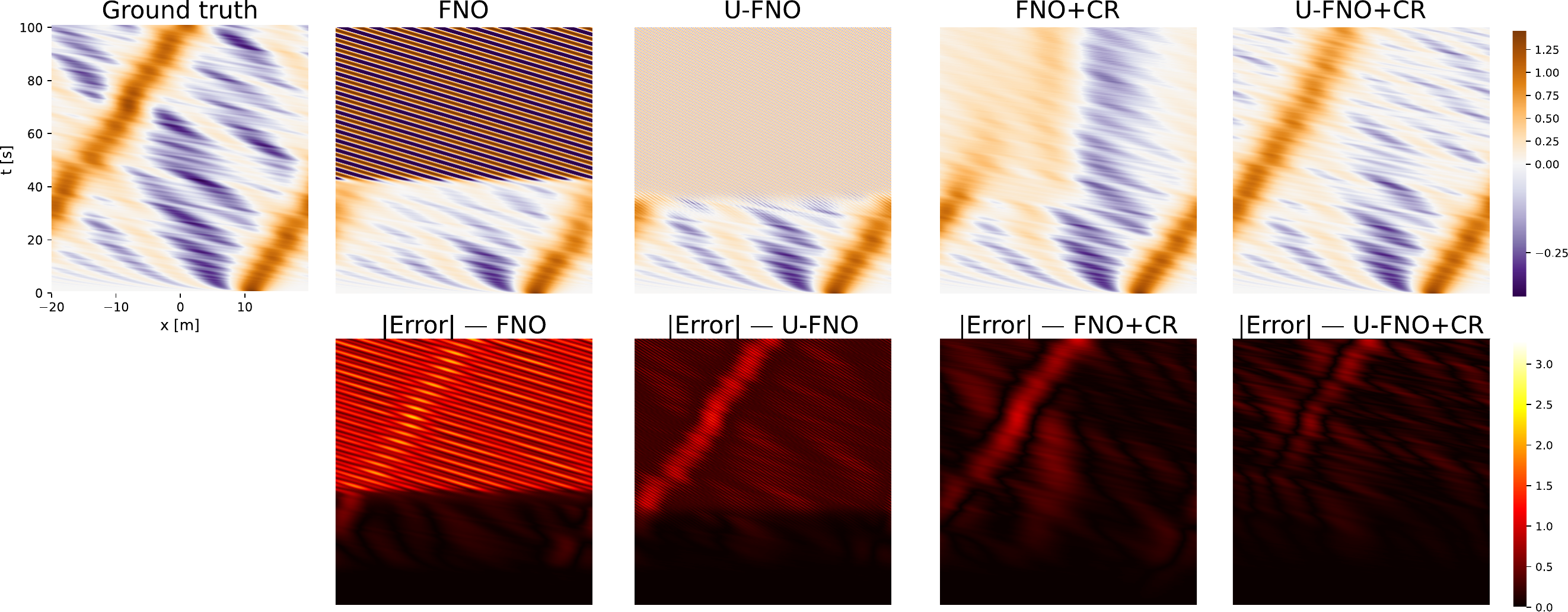}
  \caption{KdV space--time plots of $u(x,t)$ for representative test
           trajectories: ground truth, baseline rollout, and
           commutativity-regularised rollout, run for the full $2000$
           steps (FNO variants).}
  \label{fig:kdv_snapshots_fno}
\end{figure}

\section{BVE experiment: details}
\label{app:exp:bve}

This appendix expands the BVE experiment of Section~\ref{sec:exp:bve}:
data generation, the 2D UNet architecture, training and regulariser
hyperparameters, evaluation, the full
per-horizon RMSE numbers underlying
Figure~\ref{fig:bve_rollout_rmse} and Table~\ref{tab:bve_full_horizons}, and
per-trajectory snapshots.

\subsection{Data generation}
\label{app:exp:bve:data}

\paragraph{Equation and solver.}
We integrate the 2D barotropic vorticity equation
\begin{equation}
  \partial_t \zeta + J(\psi,\,\zeta + \beta y)
   \;=\; -\nu(-\nabla^{2})^{2}\zeta - r\zeta,
   \qquad \zeta = \nabla^{2}\psi,
  \label{eq:bve}
\end{equation}
on the doubly-periodic domain $[0, 2\pi]^{2}$ with $N{=}64$ grid
points per side. The solver is pseudospectral in space with the
$2/3$-rule dealiasing mask and RK4 in time. The Jacobian
$J(\psi,q) = \partial_x\psi\,\partial_y q
            -\partial_y\psi\,\partial_x q$
is evaluated in real space; all linear operations
(Laplacian inversion, derivatives, hyperviscosity) are evaluated
in spectral space. Physical and numerical parameters are summarised
in Table~\ref{tab:bve_phys}.

\begin{table}[h]
  \centering
  \caption{BVE solver parameters.}
  \label{tab:bve_phys}
  \small
  \begin{tabular}{ll}
    \toprule
    Domain                & $[0, 2\pi]^{2}$, doubly periodic \\
    Grid                  & $N \times N = 64 \times 64$ \\
    $\beta$ (planetary vorticity gradient) & $1.0$ \\
    Hyperviscosity        & $\nu = 10^{-8}$, biharmonic order $p=2$ \\
    Ekman drag $r$        & $10^{-2}$ \\
    Time step $\Delta t$  & $5\times 10^{-4}$ \\
    Save stride           & every $100$ steps;
                            $\Delta t_{\text{out}} = 0.05$\,s \\
    Spin-up               & $2.0$\,s (discarded) \\
    Recording window      & $10$\,s ($200$ snapshots/trajectory) \\
    Dealiasing            & $2/3$ rule \\
    Time integrator       & RK4 \\
    \bottomrule
  \end{tabular}
\end{table}

\paragraph{Initial conditions.}
Initial conditions are random Gaussian fields with prescribed
energy spectrum
\begin{equation}
  E(k) \;\propto\; k^{4}\,
  \exp\!\bigl(-2\,(k/k_{\text{peak}})^{2}\bigr),
  \qquad k_{\text{peak}} = 6,
\end{equation}
sampled in spectral space with i.i.d.\ uniform phases on
$[0, 2\pi)$, then rescaled in real space so that the RMS vorticity is
$1.5$. Each trajectory is integrated through a $2$\,s spin-up that is
discarded, after which $200$ snapshots are recorded at
$\Delta t_{\text{out}}{=}0.05$\,s, giving a $10$\,s recording window.

\paragraph{Splits and normalisation.}
The dataset contains $300$ trajectories, split deterministically into
$240$ / $30$ / $30$ for train / val / test.
Vorticity fields are standardised to zero mean and unit standard
deviation using the train-split statistics before being passed
through the network; predictions are evaluated in this normalised
space. All RMSE numbers in the main text and this appendix are in
these normalised vorticity units. 

\subsection{Architecture}
\label{app:exp:bve:arch}

The 2D UNet advances the field one step in time,
$\zeta_t \mapsto \zeta_{t+1}$, with the following layout:
\begin{itemize}
  \item \textbf{Encoder.} Three residual conv blocks with channel
        widths $(64, 128, 256)$. Each block is two
        $3{\times}3$ circular convolutions wrapped in
        \texttt{Conv $\to$ GroupNorm($\min(8,C)$) $\to$ GELU},
        with a $1{\times}1$ skip when the channel count changes.
        Each level is followed by $2{\times}2$ average pooling,
        producing skip activations at resolutions $64, 32, 16$.
  \item \textbf{Bottleneck.} A residual conv block
        $\mathrm{ConvBlock}(256 \to 256)$ at the
        $8 \times 8 \times 256 = 16384$-dimensional latent state.
  \item \textbf{Decoder.} Two transposed
        $2{\times}2$ convolutions interleaved with residual conv
        blocks consuming the corresponding encoder skip activations
        at resolutions $16$ and $32$. A bilinear interpolation
        guards against odd-dimension rounding before concatenation.
  \item \textbf{Head.} A $1{\times}1$ convolution to a single
        output channel.
\end{itemize}
The latent state used by the regulariser is the output of
\texttt{encode($\zeta$)} after all encoder downsamplings and the
bottleneck residual block, and the latent advance operator is
$F(z, \mathrm{skips}) =
\mathrm{encode}\bigl(\mathrm{decode}(z, \mathrm{skips})\bigr)$,
exactly as in the KdV UNet of Section~\ref{sec:exp:kdv}. The Jacobian
penalised by the regulariser is therefore the Jacobian of this
encode$\circ$decode map at the bottleneck, naturally coupling
bottleneck channels with the skip-feature channels.

\subsection{Training and regulariser hyperparameters}
\label{app:exp:bve:hp}

\begin{table}[h]
  \centering
  \caption{BVE training hyperparameters, identical between baseline
           and regularised regimes except for the loss term.}
  \label{tab:bve_hp}
  \small
  \begin{tabular}{ll}
    \toprule
    Optimiser            & AdamW \\
    Peak learning rate   & $10^{-4}$ \\
    Weight decay         & $10^{-5}$ \\
    Schedule             & cosine annealing to $10^{-7}$ \\
    Epochs               & $500$ \\
    Batch size           & $128$ \\
    Training trajectory length & $200$ frames ($10$\,s) \\
    Loss (one-step)      & $\|F_\theta(\zeta_t) - \zeta_{t+1}\|^2$ (MSE) \\
    Checkpoint selection & best validation MSE \\
    \midrule
    Regulariser          & latent-space comm.\ + normality \\
    $\lambda_\mathrm{c}$ & $10^{-7}$ \\
    $\lambda_\mathrm{n}$ & $10^{-7}$ \\
    JVP frequency        & every $15$th minibatch \\
    Regulariser sub-batch& first $25$ samples of the minibatch \\
    Regulariser pair     & in-trajectory adjacent step
                           $(\zeta_{t'},\zeta_{t'+1})$,
                           $t'$ uniform per sample \\
    Probe vector $v$     & i.i.d.\ Gaussian, fresh per evaluation \\
    \bottomrule
  \end{tabular}
\end{table}

\paragraph{Construction of the in-trajectory pair.}
Each minibatch sample carries, in addition to the supervised pair
$(\zeta_t, \zeta_{t+1})$, an independently drawn second adjacent
pair $(\zeta_{t'}, \zeta_{t'+1})$ from the \emph{same} trajectory
with $t'$ uniform over the recorded frames. The regulariser encodes
both endpoints of this partner pair,
$z_{t'} = \mathrm{encode}(\zeta_{t'})$ and
$z_{t'+1} = \mathrm{encode}(\zeta_{t'+1})$, and computes
\begin{equation}
  \mathcal{L}_\mathrm{c}
   \;=\; \bigl\|[J_{t'},\,J_{t'+1}]\,v\bigr\|^{2},
   \qquad
  \mathcal{L}_\mathrm{n}
   \;=\; \bigl\|(J_{t'}^\top J_{t'} - J_{t'} J_{t'}^\top)\,v\bigr\|^{2},
\end{equation}
on the latent state space, with
$J_\bullet = \partial F/\partial z\,|_{(z_\bullet)}$
and $v$ a fresh Gaussian probe vector. Each Jacobian--vector product
is computed by a single \texttt{torch.func.jvp} call (and one
\texttt{vjp} for the normality term). The total loss is
$\mathcal{L} = \mathrm{MSE}
              + \lambda_\mathrm{c}\mathcal{L}_\mathrm{c}
              + \lambda_\mathrm{n}\mathcal{L}_\mathrm{n}$
with $\lambda_\mathrm{c} = \lambda_\mathrm{n} = 10^{-7}$.
The regulariser is applied on the first $25$ samples of every
$15$th minibatch, which keeps the training-time overhead small
relative to the baseline per epoch while still providing a fresh
in-trajectory pair on each application.

\subsection{Evaluation Metric}
\label{app:exp:bve:eval}

The reported metric is the per-step spatial RMSE on the
normalised vorticity field,
\begin{equation*}
  \mathrm{RMSE}(t)
   \;=\;
  \sqrt{\tfrac{1}{N^{2}}\sum_{i,j}(\hat\zeta_{t,ij} - \zeta_{t,ij})^{2}},
\end{equation*}
averaged over the $30$ test trajectories per seed.

\subsection{Full per-horizon results}
\label{app:exp:bve:numbers}

Table~\ref{tab:bve_full_horizons} reports the per-step RMSE at every
$20$ rollout steps on the same $30$ test trajectories, together with
the across-seed standard deviation. Two features are visible across
the entire range. First, the baseline crosses RMSE $1$ around step
$33$ ($t{\approx}1.65$\,s, well within the training horizon) and
saturates near RMSE $\approx 5$--$6$ from step $\sim$$140$ onward;
the regularised model stays bounded throughout. 
The regulariser
improves rollout variance
 suggesting that the unconstrained Jacobian product dominates
the variability of the baseline rollout while the
regularised rollout is essentially a structural property of the data
that all seeds converge to.

\begin{table}[h]
  \centering
  \caption{BVE per-step RMSE on the $30$-trajectory test split at
           every $20$ rollout steps (mean over $3$
           training seeds, normalised vorticity units).
           $\Delta t_{\text{out}} = 0.05$\,s.}
  \label{tab:bve_full_horizons}
  \scriptsize
  \setlength{\tabcolsep}{3pt}
  \begin{tabular}{lcccccccccc}
    \toprule
    Step    & $1$ & $20$ & $40$ & $60$ & $80$ & $100$ & $120$ & $140$ & $160$ & $199$ \\
    Time (s)& $0.05$ & $1$ & $2$ & $3$ & $4$ & $5$ & $6$ & $7$ & $8$ & $9.95$ \\
    \midrule
    Baseline (mean)
        & $0.0040$
        & $0.272$
        & $1.582$
        & $2.810$
        & $3.620$
        & $4.266$
        & $4.890$
        & $5.391$
        & $5.721$
        & $6.007$ \\
    Baseline (std)
        & $0.0001$
        & $0.027$
        & $0.242$
        & $0.381$
        & $0.332$
        & $0.282$
        & $0.695$
        & $1.187$
        & $1.540$
        & $1.896$ \\
    \midrule
    Comm.\ reg.\ (mean)
        & $0.0028$
        & $0.0512$
        & $0.0962$
        & $0.140$
        & $0.186$
        & $0.239$
        & $0.302$
        & $0.377$
        & $0.465$
        & $0.665$ \\
    Comm.\ reg.\ (std)
        & $0.00003$
        & $0.0007$
        & $0.0014$
        & $0.002$
        & $0.003$
        & $0.004$
        & $0.005$
        & $0.006$
        & $0.007$
        & $0.009$ \\
    \midrule
    Ratio (mean)
        & $1.4$ & $5.3$ & $16.4$ & $20.1$ & $19.5$ & $17.8$ & $16.2$ & $14.3$ & $12.3$ & $9.0$ \\
    \bottomrule
  \end{tabular}
\end{table}

\subsection{Per-trajectory snapshots}
\label{app:exp:bve:snapshots}
See Figure \ref{fig:bve_snapshots_rewrite}

\begin{figure}[p]
    \vspace{-1.5cm}
  \centering
  \includegraphics[width=\linewidth]{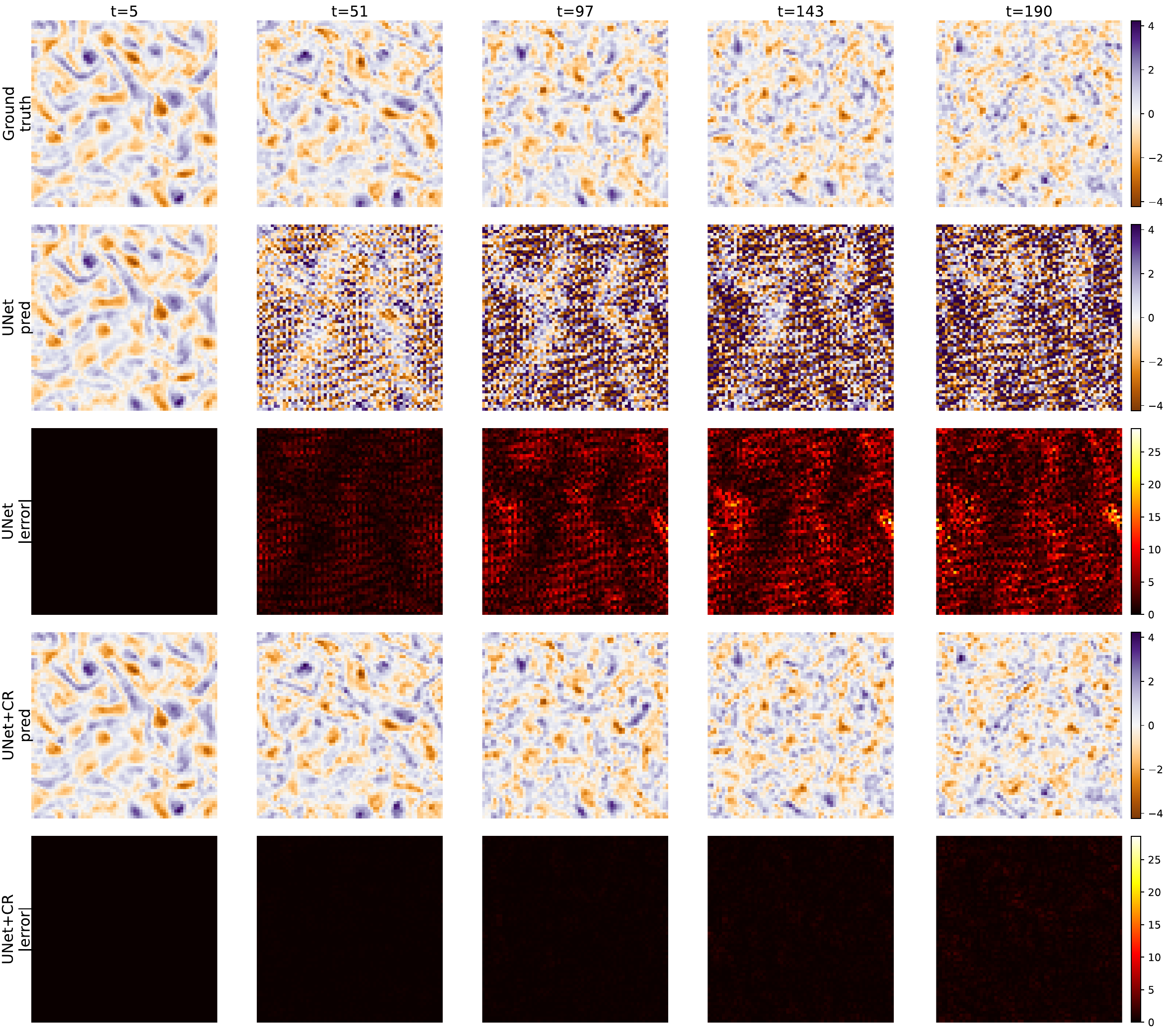}
  
  \caption{Vorticity snapshots $\zeta(x,y,t)$ for representative
           test trajectories: ground truth, baseline rollout, and
           commutativity-regularised rollout, all run for the full
           $199$ rollout steps ($\sim 10$\,s) from a single initial
           condition. 
           Color scale is symmetric and shared between
           truth and predictions per trajectory; absolute error uses
           a separate scale. The baseline visibly amplifies vorticity
           to several times the natural scale of the flow well before
           the end of the training horizon, while the regularised
           model preserves coherent structures and the correct
           amplitude band over the full rollout.
           }
  \label{fig:bve_snapshots_rewrite}
\end{figure}

\begin{figure}[p]
    \vspace{-1.5cm}
  \centering
  
  \includegraphics[width=\linewidth]{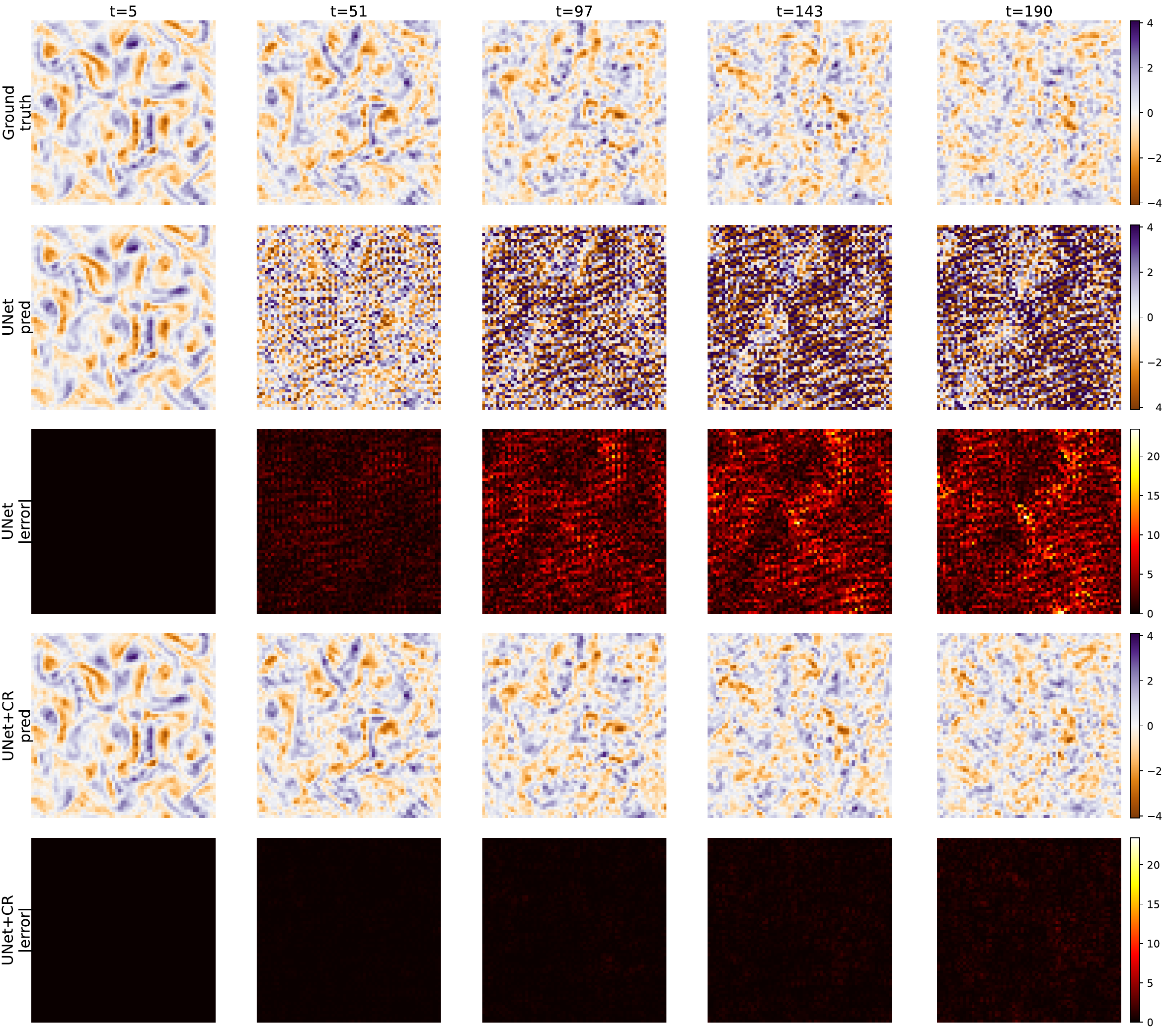}
  \caption{Further vorticity snapshots $\zeta(x,y,t)$ for representative
           test trajectories: ground truth, baseline rollout, and
           commutativity-regularised rollout, all run for the full
           $199$ rollout steps ($\sim 10$\,s) from a single initial
           condition. 
           }
  \label{fig:bve_snapshots_rewrite2}
\end{figure}

\section{FCN/ERA5 experiment: details}
\label{app:exp:fcn}

This appendix expands the FCN experiment of
Section~\ref{sec:exp:fcn}: data and preprocessing, the
training/eval protocol, the regulariser implementation in the AFNO
latent stack, and the full per-channel, per-lead RMSE 
numbers underlying Figure~\ref{fig:fcn_rmse_curves} in 
Tables~\ref{tab:fcn_rmse_full_2018} and \ref{tab:fcn_rmse_full_2019}.

\subsection{Data and preprocessing}
\label{app:exp:fcn:data}

\paragraph{Source.}
ERA5 reanalysis on FCN's native $20$-channel state at
$721{\times}1440$ ($0.25^{\circ}$, $\Delta t = 6$\,h):
$10$\,m winds $(u_{10}, v_{10})$, $2$\,m temperature
$t_{2\mathrm{m}}$, surface and mean-sea-level pressures
$(\mathrm{sp}, \mathrm{msl})$, $T$ at $850$\,hPa, winds at
$1000$/$850$/$500$\,hPa, geopotential at $1000$/$850$/$500$/$50$\,hPa,
relative humidity at $500$/$850$\,hPa, $T$ at $500$\,hPa, and total
column water vapour $\mathrm{tcwv}$. We use FCN's per-variable
normalisation statistics throughout, so the network is run in its
native input space; metric values are reported in physical units
after the inverse normalisation.

\paragraph{Years.}
Pretraining of FCN v1 covers $1979$--$2015$, so all
fine-tuning years used here lie inside its training range and
expose the model to no new physical signal:
\begin{itemize}\itemsep -0.2em
  \item FCN+CR\,[2013--15]: fine-tuned on
        $\{2013, 2014, 2015\}$.
  \item Plain-finetune
        baseline: fine-tuned on $\{2013, 2014, 2015\}$.
\end{itemize}
Validation uses 2017 during fine-tuning. The reported test years
are $2018$ (Section~\ref{sec:exp:fcn}) and $2019$ (held-out
replication, Appendix~\ref{app:exp:fcn:numbers}); both are
genuinely outside FCN v1's training range.

\paragraph{Evaluation initial conditions.}
For each test year we sample $41$ initial conditions at a fixed
stride (every $9$\,h, stride=$36$ on the
$6$\,h steps) and run an autoregressive $40$-step rollout
($10$ days) per IC. The reported metric is the mean over ICs of the
latitude-weighted RMSE,
\begin{equation*}
  \mathrm{RMSE}(t)
  \;=\;
  \sqrt{\,
    \tfrac{1}{HW}\sum_{i,j}
      w_i\,\bigl(\hat X_{t,ij} - X_{t,ij}\bigr)^{2}\,},
  \qquad
  w_i = \cos(\phi_i),
\end{equation*}
on each of the channels
$z_{500}$, $t_{850}$, $u_{850}$, $t_{2\mathrm{m}}$. 
Numbers are reported in physical units
after inverse-normalising with the FCN per-channel statistics.

\subsection{Architecture and where the regulariser acts}
\label{app:exp:fcn:arch}

The backbone is the released FCN v1 AFNO with $74.7$\,M total
parameters. Fine-tuning unfreezes the $34.5$\,M parameters of the
deep AFNO blocks and the head; the patch-embed and the first
several blocks are kept frozen and run once per minibatch with their
output detached.

\paragraph{Latent space operator.}
The latent state on which the regulariser acts is the activation
\emph{after} the first $K{=}10$ AFNO blocks; the regulariser
flows JVP / VJP only through the remaining (deep) blocks, so the
Jacobian--vector products it touches have memory cost
$\propto\!(\text{depth}-K)$ rather than full depth. This is
necessary at $720{\times}1440$ resolution: fully differentiating the
network would otherwise blow GPU memory at batch size $\geq 1$, even
at low precision. 

\subsection{Training and regulariser hyperparameters}
\label{app:exp:fcn:hp}

\begin{table}[h]
  \centering
  \caption{FCN fine-tuning hyperparameters (identical across the
           two fine-tune regimes except for the loss term).}
  \label{tab:fcn_hp}
  \small
  \begin{tabular}{ll}
    \toprule
    Backbone               & FCN v1 AFNO, pretrained
                              \citep{pathak2022fourcastnet} \\
    Optimiser              & AdamW \\
    Peak learning rate     & $5\times 10^{-6}$ \\
    Weight decay           & $0$ \\
    Schedule               & cosine annealing \\
    Epochs                 & $50$ \\
    Batch size             & $2$ (single-GPU, A100\,80\,GB) \\
    Loss (one-step)        & latitude-weighted MSE
                              between $\hat X_{t+1}$ and ERA5 \\
    \midrule
    Regulariser            & latent comm.\ + normality \\
    $\lambda_\mathrm{c}$   & $10^{-5}$ \\
    $\lambda_\mathrm{n}$   & $10^{-5}$ \\
    JVP frequency          & every minibatch
                              (\texttt{comm\_freq}{=}$1$) \\
    Skip blocks            & first $10$ AFNO blocks detached \\
    Comm.\ pair            & adjacent-step pair $(X_t, X_{t+1})$\\
    Probe vector $v$       & i.i.d.\ Rademacher\\ %
    \bottomrule
  \end{tabular}
\end{table}

\paragraph{Construction of the JVP.}
At each (regulariser-active) minibatch we draw the supervised pair
$(X_t, X_{t+1})$ and compute, in the post-skip latent space,
\begin{equation*}
  \mathcal{L}_\mathrm{c}
   \;=\; \bigl\|[J(z_t),\,J(z_{t+1})]\,v\bigr\|^{2},
   \qquad
  \mathcal{L}_\mathrm{n}
   \;=\; \bigl\|J(z_t)^{\top} J(z_t)\,v
                - J(z_t) J(z_t)^{\top}\,v\bigr\|^{2},
\end{equation*}
where $J$ is the Jacobian of the deep-block stack at the indicated
latent state, $v$ is a fresh Rademacher probe, and the JVP/VJP are
evaluated on the first two samples of the minibatch (memory bound).
Three JVPs and one VJP are needed per call; $J(z_t)\,v$ is reused
between the commutator and normality terms. The total fine-tune
loss is
$\mathcal{L} = \mathcal{L}_\mathrm{MSE}
              + \lambda_\mathrm{c}\,\mathcal{L}_\mathrm{c}
              + \lambda_\mathrm{n}\,\mathcal{L}_\mathrm{n}$
with $\lambda_\mathrm{c} = \lambda_\mathrm{n} = 10^{-5}$.

\subsection{Full per-channel, per-lead RMSE numbers}
\label{app:exp:fcn:numbers}

Table~\ref{tab:fcn_rmse_full_2018} (test year $2018$) and
Table~\ref{tab:fcn_rmse_full_2019} (held-out year $2019$) give the
latitude-weighted RMSE in physical units at every day from $1$ to
$10$ on all four WB2 headline channels.

\begin{table}[h]
  \centering
  \caption{ERA5~$2018$ rollout RMSE (latitude-weighted, physical
           units) at daily lead times for the four variables, averaged over $41$ initial conditions
           ($9$\,h stride). \textbf{Bold} marks the best entry
           per column. The FCN+CR entries here use the
           [2013--15] fine-tune.}
  \label{tab:fcn_rmse_full_2018}
  \scriptsize
  \setlength{\tabcolsep}{2.5pt}
  \begin{tabular}{llcccccccccc}
    \toprule
    Channel & Method & 1\,d & 2\,d & 3\,d & 4\,d & 5\,d & 6\,d & 7\,d & 8\,d & 9\,d & 10\,d \\
    \midrule
    \multirow{3}{*}{$z_{500}$ [m$^{2}$/s$^{2}$]}
       & Frozen        & $86.9$ & $165.0$ & $258.3$ & $374.5$ & $494.5$ & $608.8$ & $\mathbf{704.4}$ & $785.0$ & $\mathbf{854.3}$ & $\mathbf{903.7}$ \\
       & Finetune      & $\mathbf{81.7}$ & $\mathbf{162.7}$ & $260.8$ & $381.1$ & $503.0$ & $615.5$ & $710.9$ & $792.9$ & $871.7$ & $930.0$ \\
       & FCN+CR [2013-2015] & $82.7$ & $164.2$ & $264.0$ & $385.0$ & $510.3$ & $622.6$ & $712.5$ & $\mathbf{790.0}$ & $855.3$ & $908.4$ \\
    \midrule
    \multirow{3}{*}{$t_{850}$ [K]}
       & Frozen        & ${0.86}$ & $\mathbf{1.19}$ & $\mathbf{1.59}$ & $\mathbf{2.06}$ & $\mathbf{2.52}$ & $\mathbf{2.98}$ & $\mathbf{3.36}$ & $\mathbf{3.61}$ & $3.88$ & $4.26$ \\
       & Finetune      & $\mathbf{0.84}$ & $\mathbf{1.19}$ & $1.62$ & $2.10$ & $2.57$ & $3.02$ & $3.38$ & $3.67$ & $4.10$ & $5.18$ \\
       & FCN+CR [2013-2015] & ${0.86}$ & $1.22$ & $1.65$ & $2.13$ & $2.60$ & $3.07$ & $3.42$ & $3.68$ & $\mathbf{3.91}$ & $\mathbf{4.14}$ \\
    \midrule
    \multirow{3}{*}{$u_{850}$ [m/s]}
       & Frozen        & $\mathbf{1.60}$ & $\mathbf{2.43}$ & $\mathbf{3.29}$ & $\mathbf{4.15}$ & $\mathbf{4.93}$ & $\mathbf{5.60}$ & $\mathbf{6.06}$ & $\mathbf{6.46}$ & $\mathbf{6.80}$ & $\mathbf{7.01}$ \\
       & Finetune      & $1.61$ & $2.45$ & $3.34$ & $4.23$ & $5.01$ & $5.67$ & $6.15$ & $6.53$ & $6.88$ & $7.20$ \\
       & FCN+CR [2013-2015] & $1.62$ & $2.48$ & $3.39$ & $4.28$ & $5.05$ & $5.73$ & $6.21$ & $6.56$ & $6.90$ & $7.08$ \\
    \midrule
    \multirow{3}{*}{$t_{2\mathrm{m}}$ [K]}
       & Frozen        & $0.97$ & $1.18$ & $\mathbf{1.42}$ & $\mathbf{1.72}$ & $\mathbf{2.03}$ & $\mathbf{2.34}$ & $2.56$ & $3.00$ & $4.32$ & $7.53$ \\
       & Finetune      & $\mathbf{0.95}$ & $\mathbf{1.18}$ & $1.43$ & $1.74$ & $2.05$ & $2.36$ & $2.66$ & $3.97$ & $7.62$ & $16.16$ \\
       & FCN+CR [2013-2015] & $0.97$ & $1.21$ & $1.47$ & $1.79$ & $2.10$ & $2.39$ & $\mathbf{2.62}$ & $\mathbf{2.87}$ & $\mathbf{3.19}$ & $\mathbf{4.48}$ \\
    \bottomrule
  \end{tabular}
\end{table}

\begin{table}[h]
  \centering
  \caption{ERA5~$2019$ replication: rollout RMSE (latitude-weighted,
           physical units) at daily lead times. 
           \textbf{Bold} marks the best entry per column.
           }
  \label{tab:fcn_rmse_full_2019}
  \scriptsize
  \setlength{\tabcolsep}{2.5pt}
  \begin{tabular}{llcccccccccc}
    \toprule
    Channel & Method & 1\,d & 2\,d & 3\,d & 4\,d & 5\,d & 6\,d & 7\,d & 8\,d & 9\,d & 10\,d \\
    \midrule
    \multirow{3}{*}{$z_{500}$ [m$^{2}$/s$^{2}$]}
       & Frozen                & $87.9$ & $\mathbf{166.1}$ & $\mathbf{263.2}$ & $\mathbf{369.9}$ & $\mathbf{481.9}$ & $\mathbf{595.3}$ & $\mathbf{697.7}$ & $\mathbf{788.2}$ & $\mathbf{861.0}$ & $\mathbf{908.9}$ \\
       & Finetune\,[2013--15]  & $\mathbf{84.1}$ & $164.9$ & $263.4$ & $371.2$ & $486.9$ & $602.8$ & $710.5$ & $797.0$ & $868.2$ & $920.1$ \\
       & FCN+CR\,[2013--15]    & ${84.7}$ & $166.2$ & $267.0$ & $377.7$ & $491.3$ & $607.0$ & $714.6$ & $803.6$ & $872.6$ & $920.0$ \\
    \midrule
    \multirow{3}{*}{$t_{850}$ [K]}
       & Frozen                & $\mathbf{0.87}$ & $\mathbf{1.20}$ & $\mathbf{1.62}$ & $\mathbf{2.05}$ & $\mathbf{2.49}$ & $\mathbf{2.91}$ & $\mathbf{3.28}$ & $\mathbf{3.65}$ & $\mathbf{3.93}$ & $\mathbf{4.23}$ \\
       & Finetune\,[2013--15]  & $0.86$ & $1.21$ & $1.63$ & $2.07$ & $2.52$ & $2.94$ & $3.32$ & $3.67$ & $4.10$ & $5.02$ \\
       & FCN+CR\,[2013--15]    & $\mathbf{0.87}$ & $1.23$ & $1.66$ & $2.11$ & $2.57$ & $2.99$ & $3.37$ & $3.73$ & $4.00$ & ${4.24}$ \\
    \midrule
    \multirow{3}{*}{$u_{850}$ [m/s]}
       & Frozen                & $\mathbf{1.62}$ & $\mathbf{2.43}$ & $\mathbf{3.35}$ & $\mathbf{4.15}$ & $\mathbf{4.90}$ & $\mathbf{5.55}$ & $\mathbf{6.03}$ & $\mathbf{6.46}$ & $\mathbf{6.78}$ & $\mathbf{6.95}$ \\
       & Finetune\,[2013--15]  & $1.63$ & $2.45$ & $3.38$ & $4.19$ & $4.95$ & $5.62$ & $6.16$ & $6.53$ & $6.84$ & $7.10$ \\
       & FCN+CR\,[2013--15]    & $1.64$ & $2.48$ & $3.41$ & $4.23$ & $4.98$ & $5.65$ & $6.21$ & $6.62$ & $6.91$ & $7.12$ \\
    \midrule
    \multirow{3}{*}{$t_{2\mathrm{m}}$ [K]}
       & Frozen                & ${0.97}$ & $\mathbf{1.18}$ & $\mathbf{1.42}$ & $\mathbf{1.69}$ & $\mathbf{1.99}$ & $\mathbf{2.28}$ & $\mathbf{2.55}$ & $\mathbf{2.81}$ & $3.58$ & $5.54$ \\
       & Finetune\,[2013--15]  & $\mathbf{0.95}$ & $1.18$ & $1.43$ & $1.71$ & $2.01$ & $2.28$ & $2.61$ & $3.73$ & $6.96$ & $14.43$ \\
       & FCN+CR\,[2013--15]    & ${0.97}$ & $1.20$ & $1.47$ & $1.75$ & $2.06$ & $2.34$ & $2.63$ & ${2.87}$ & $\mathbf{3.33}$ & $\mathbf{3.99}$ \\
    \bottomrule
  \end{tabular}
\end{table}

\subsection{Qualitative forecasts}
\label{app:exp:fcn:qualitative}
See Figure \ref{fig:fcn_forecast_sample} for ERA5 t2m forecasts.

\begin{figure}[h!]
  \centering
  \includegraphics[width=\linewidth]{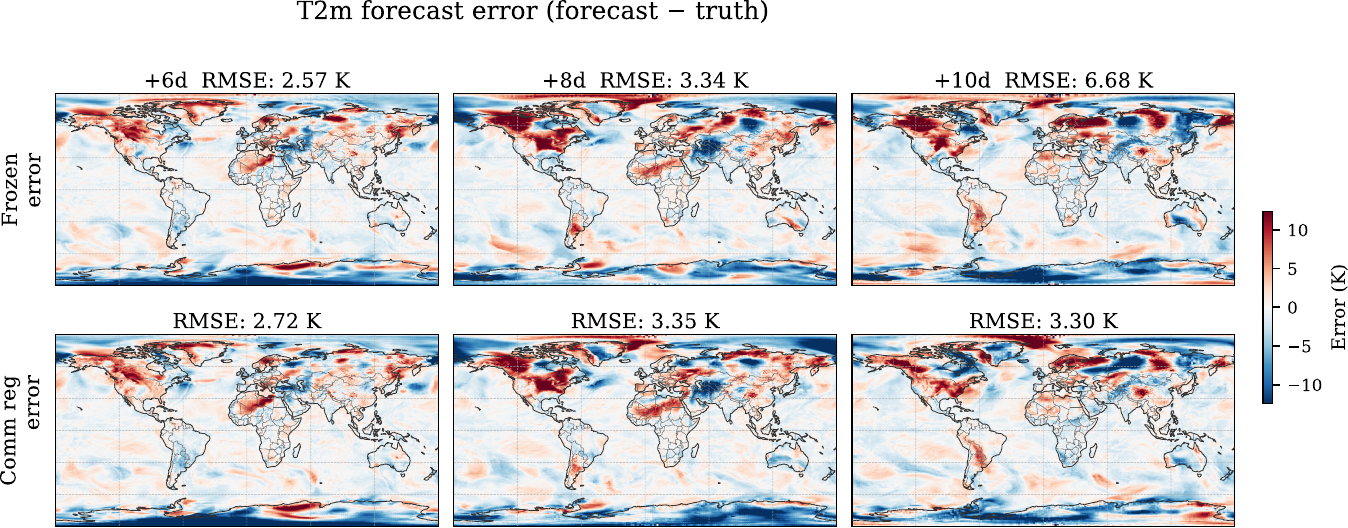}
  \includegraphics[width=\linewidth]{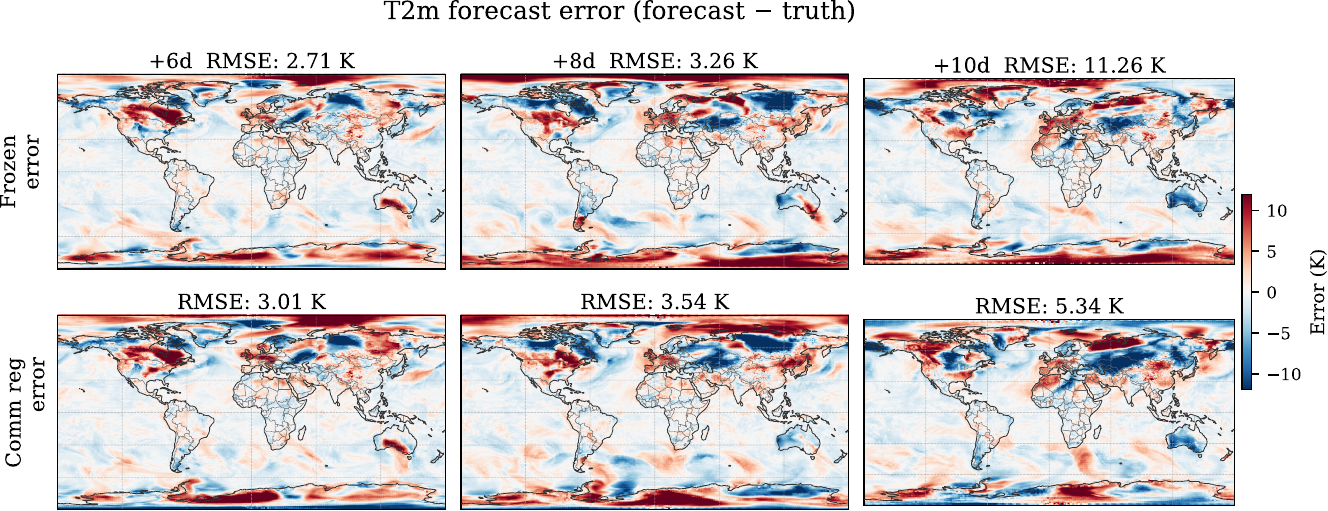}
  \caption{Qualitative day-$6$, $8$ and $10$ rollout error snapshots of
           $t_{2\mathrm{m}}$ on a single ERA5~$2018$ initial
           condition for 2m temperature.}
  \label{fig:fcn_forecast_sample}
\end{figure}

\section{SST: details}
\label{app:exp:sst}

This appendix expands the SST experiment summarised in
Section~\ref{sec:exp:sst}: data and preprocessing, the full UNet
architecture, training hyperparameters, the regulariser
implementation, and the per-step rollout numbers and qualitative
snapshots underlying Figure~\ref{fig:sst:rollout-rmse}.

\subsection{Data and preprocessing}
\label{app:exp:sst:data}

\paragraph{Source.}
NOAA Optimum Interpolation Sea-Surface Temperature, weekly-mean
product (\texttt{sst.wkmean.1990-present})~\citep{reynolds2002oisst},
covering the global ocean on a $1^\circ$ ($180\!\times\!360$) grid
at weekly cadence. The version we use contains $1727$ weeks
beginning in 1990. Land pixels are filled with $0$ in the raw
NetCDF and are masked only in the visualisations
(Appendix~\ref{app:exp:sst:snapshots}); the network sees them as
zero pixels.

\paragraph{Padding.}
The network requires spatial dimensions divisible by $2^4=16$.
We pad each frame from $180\!\times\!360$ to $192\!\times\!384$
using boundary-aware padding: $6$ rows of \emph{reflect} padding
at each pole (no real wrap-around at the poles), and $12$ columns
of \emph{wrap} (periodic) padding at the
$0^\circ$/$360^\circ$ longitude seam. This avoids the mid-Pacific
padding artefacts that a default zero or reflect padding would
introduce and matches the natural periodicity of the longitude axis.

\paragraph{Splits and normalisation.}
We split chronologically into
$1200$ / $150$ / $377$ weeks of train / val / test.
The test split therefore covers $\approx 7.24$\,years and includes
$\sim\!7$ full annual cycles. Frames are standardised to zero mean
and unit standard deviation using train-set statistics; all RMSE
numbers in Section~\ref{sec:exp:sst} and below are reported in
these normalised units.

\subsection{Architecture}
\label{app:exp:sst:arch}

The backbone is a 2D UNet with circular-aware
inputs (the wrap padding above provides the longitude periodicity)
operating on the full $192\!\times\!384$ field.

\paragraph{Encoder/decoder structure.}
Depth $D=4$, base channel width $C_0=64$, so the encoder produces
channel widths $(64, 128, 256, 512)$ at successive levels. Each
encoder block is a residual two-conv block
\texttt{Conv$3\!\times\!3$ $\to$ GroupNorm($\min(8,C)$) $\to$ GELU}
applied twice with a $1\!\times\!1$ skip when the channel count
changes, followed by $2\!\times\!2$ average pooling. The decoder
mirrors the encoder with $2\!\times\!2$ transposed convolutions and
a residual block taking the concatenated skip features. The
bottleneck spatial size is therefore
$192/16\,\times\,384/16 = 12\!\times\!24$ at $512$ channels, the
latent state on which the regulariser acts. A final $1\!\times\!1$
convolution maps back to one output channel (the next-week SST).
The model has $\sim\!17$\,M parameters.

\paragraph{Inputs and outputs.}
$k_{\text{in}}=1$: the network sees one previous week and predicts
the next. We did not find a benefit from longer context windows on
this data.

\subsection{Training and regulariser}
\label{app:exp:sst:hp}

\paragraph{Optimiser and schedule.}
AdamW with learning rate $10^{-4}$, weight decay $10^{-5}$,
batch size $4$, $200$ epochs. The learning rate follows a cosine
annealing schedule with $\eta_{\min} = 10^{-7}$. Loss is one-step
MSE on the next-week SST: the model never sees a multi-step rollout
during training.

\paragraph{Commutativity regulariser.}
The regulariser is identical in form to the BVE one
(Appendix~\ref{app:exp:bve:hp}) but applied to the
$12\!\times\!24$ bottleneck of the SST UNet. At minibatch
$b$ with input $x$ and an in-trajectory random partner $\tilde x$,
the latent advance operator
$F(z, s) = \mathrm{encode}(\mathrm{decode}(z, s))$
is differentiated through \texttt{torch.func.jvp} on the bottleneck
features and skip stack, and the squared norms
\begin{align}
  \mathcal{L}_\mathrm{c}(x,\tilde x)
  \;=\;& \bigl\|[J_F(z(x)),\,J_F(z(\tilde x))]\,v\bigr\|^2,
  \qquad\\
  \mathcal{L}_\mathrm{n}(x)
  \;=\;& \bigl\|J_F(z(x))^\top J_F(z(x))\,v
                - J_F(z(x)) J_F(z(x))^\top v \bigr\|^2,
\end{align}
are estimated on a single random Gaussian probe $v$.
The total loss is
$\mathcal{L} = \mathrm{MSE} + \lambda_\mathrm{c}\mathcal{L}_\mathrm{c}
                              + \lambda_\mathrm{n}\mathcal{L}_\mathrm{n}$
with $\lambda_\mathrm{c} = \lambda_\mathrm{n} = 10^{-7}$, evaluated
every fifth minibatch and on the first $5$ samples of that
minibatch (the exact JVP at full batch and full $192\!\times\!384$
resolution is GPU-memory bound).

\subsection{Evaluation protocol}
\label{app:exp:sst:eval}

We evaluate a single autoregressive rollout starting from the first
frame of the test split and continuing for $T_\mathrm{eval} = 376$
weeks (the test length minus the input context). At each step the
network produces the next normalised SST field, which is then fed
back as the next input. The reported metric is the per-step spatial
RMSE
\[
  \mathrm{RMSE}(t)
  \;=\;
  \sqrt{\tfrac{1}{HW}\sum_{i,j}\bigl(\hat T_{t,ij} - T_{t,ij}\bigr)^2},
\]
on the normalised field, computed over the entire padded
$192\!\times\!384$ grid (including the masked land pixels, whose
target value is $0$ in the dataset). Reporting on the
ocean-pixels-only mask gives values that are uniformly $\sim 5\%$
larger but does not change the relative comparison.

\subsection{Per-step RMSE numbers}
\label{app:exp:sst:numbers}

Table~\ref{tab:sst_rmse_full} reports per-step RMSE at a denser set
of lead times than Section~\ref{sec:exp:sst}. 
The qualitative behaviour, baseline error oscillating with the
annual cycle and growing in envelope, regularised error
phase-locked and bounded, is visible at every horizon beyond
$\sim\!3$\,months.

\begin{table}[t]
  \centering
  \caption{SST rollout spatial RMSE on the normalised test field
           at selected rollout steps (single rollout, $376$ weeks).
           Lower is better; \textbf{bold} marks the better entry per
           column. The two final columns are summary statistics over
           the entire $376$-step rollout.}
  \label{tab:sst_rmse_full}
  \small
  \setlength{\tabcolsep}{4pt}
  \begin{tabular}{lcccccccc|cc}
    \toprule
    Method
        & 1 & 13 & 26 & 52 & 104 & 156 & 260 & 376
        & mean & max \\
    Lead
        & 1\,wk & 3\,mo & 6\,mo & 1\,yr & 2\,yr & 3\,yr & 5\,yr & 7.2\,yr
        & all & all \\
    \midrule
    Baseline
        & $\mathbf{0.025}$
        & $0.059$
        & $0.120$
        & $0.160$
        & $0.250$
        & $0.277$
        & $0.193$
        & $0.148$
        & $0.248$
        & $0.516$ \\
    Commutativity reg.
        & $\mathbf{0.025}$
        & $\mathbf{0.046}$
        & $\mathbf{0.050}$
        & $\mathbf{0.062}$
        & $\mathbf{0.055}$
        & $\mathbf{0.066}$
        & $\mathbf{0.097}$
        & $\mathbf{0.114}$
        & $\mathbf{0.082}$
        & $\mathbf{0.162}$ \\
    \bottomrule
  \end{tabular}
\end{table}

\subsection{Qualitative snapshots}
\label{app:exp:sst:snapshots}
Visualizations of sea surface temperature predictions and errors are shown in Figure \ref{fig:sst_snapshots}.

\begin{figure}[ht!]
  \centering
  \includegraphics[width=\linewidth]{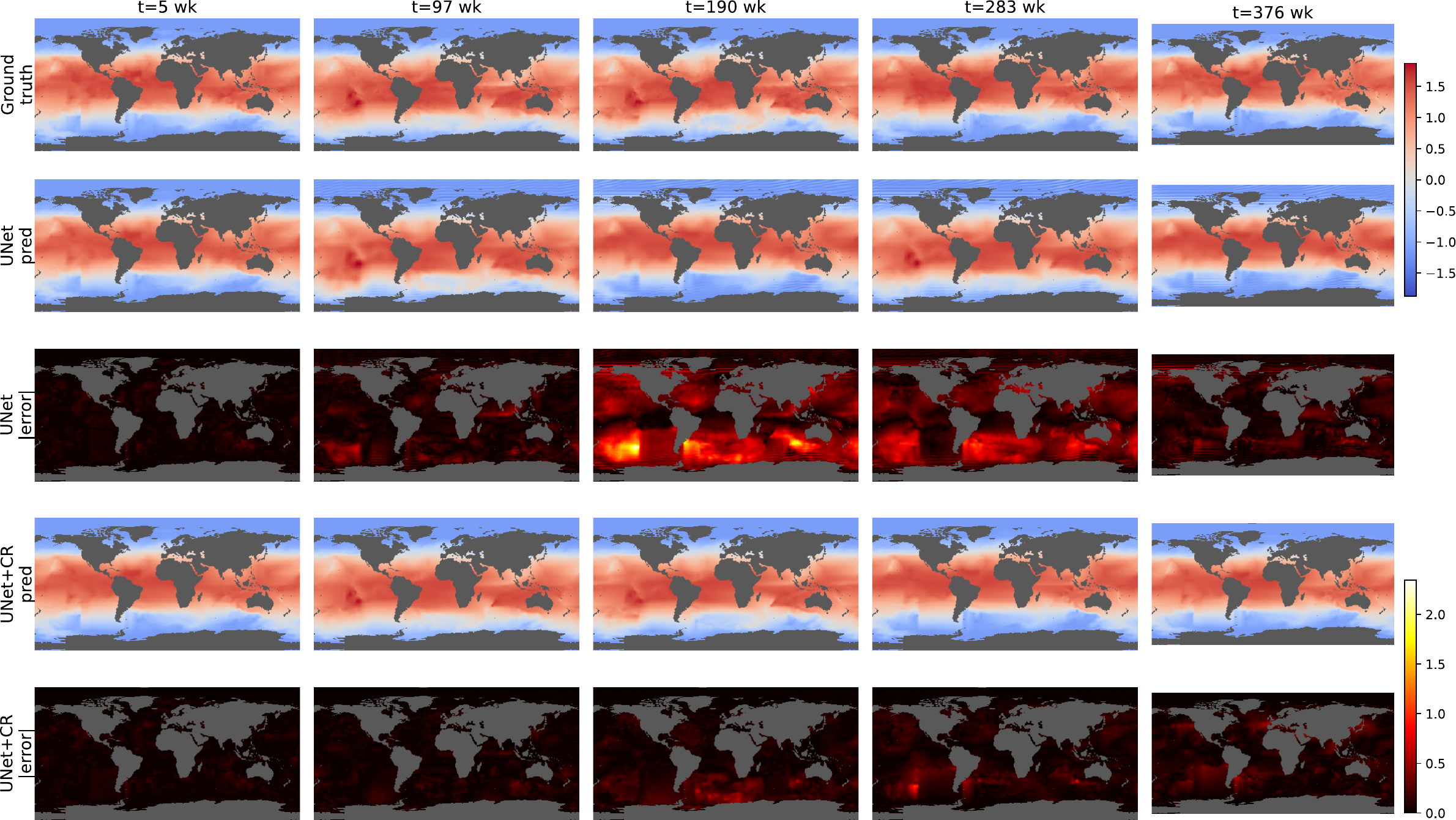}
  \caption{Spatial snapshots of the SST autoregressive rollout at
           selected lead times for the baseline and the
           commutativity-regularised UNet, with ground-truth SST in
           the top row and absolute error in the lower rows. Land is
           overlaid in grey. The baseline progressively blurs
           mid-latitude anomalies and drifts in the seasonal phase;
           the regularised rollout retains the large-scale structure
           of the true field through the full $7.2$-year window.}
  \label{fig:sst_snapshots}
\end{figure}

\end{document}